\definecolor{gblue}{RGB}{66,133,244}
\definecolor{gred}{RGB}{234, 67, 53}
\definecolor{ggreen}{RGB}{52, 168, 83}
\definecolor{gyellow}{RGB}{251, 188, 5}
\definecolor{nupurple}{RGB}{078,042, 132}
\font\tencmmib=cmmib10 \skewchar\tencmmib '60
\def\bbox{\quad\hbox{\vrule \vbox{\hrule \vskip2pt \hbox{\hskip2pt
\vbox{\hsize=1pt}\hskip2pt} \vskip2pt\hrule}\vrule}}
\def\lessim{\ \lower4pt\hbox{$
\buildrel{\displaystyle <}\over\sim$}\ }
\def\gessim{\ \lower4pt\hbox{$\buildrel{\displaystyle >}
\over\sim$}\ }
\def\qed{\hfill\break\rightline{$\bbox$}}
\DeclareMathOperator*{\argmin}{arg\,min}
\newcommand{\bet}{\begin{theorem}}
\newcommand{\ent}{\end{theorem}}
\newcommand{\norm}[1]{\left\lVert#1\right\rVert}
\newtheorem{lemma}{\bf Lemma}
\newtheorem{claim}{\bf Claim}
\newtheorem{definition}{\bf Definition}
\newtheorem{theorem}{\bf Theorem}
\newtheorem*{theorem*}{\bf Theorem}
\newtheorem*{remark*}{\bf Remark}
\newtheorem{remark}{\bf Remark}
\newtheorem{proposition}{\bf Proposition}
\newtheorem{condition}{\bf Condition}
\newenvironment{Proof of lemma}{\noindent{\bf Proof of Lemma}}{\hfill$\Box$\newline}
\newenvironment{Proof of claim}{\noindent{\bf Proof of claim}}{\hfill$\Box$\newline}
\newenvironment{Proof of theorem}{\noindent{\bf Proof of Theorem}}{\hfill\scriptsize{$\Box$}\newline}
\newenvironment{Proof of theorems}{\noindent{\bf Proof of Theorems}}{\hfill$\Box$\newline}
\newenvironment{Proof of proposition}{\noindent{\bf Proof of Proposition}}{\hfill$\Box$\newline}
\newenvironment{Proof of propositions}{\noindent{\bf Proof of Propositions}}{\hfill$\Box$\newline}
\newenvironment{Proof of exercise}{\noindent{\it Proof of Exercise:}}{\hfill$\Box$}
\definecolor{dimmedwhite}{rgb}{0.85,0.85,0.85}
\numberwithin{equation}{section}
\author{Yi Gu}
\newcommand{\ki}{\mathbf{Ker}_{\mathbf{I}}}
\newcommand{\ko}{\mathbf{Ker}_{\mathbf{O}}}
\begin{document}

\title{Equilibrium Distribution for t-Distributed Stochastic Neighbor Embedding with Generalized Kernels}
\maketitle

\begin{abstract}
    T-distributed stochastic neighbor embedding (t-SNE) is a well-known algorithm for visualizing high-dimensional data by finding low-dimensional representations (\cite{key}). In this paper, we study the convergence of t-SNE with generalized kernels and extend the results of \cite{auffinger-tsne}. Our work starts by giving a concrete formulation of generalized input and output kernels. Then we prove that under certain conditions, the t-SNE algorithm converges to an equilibrium distribution for a wide range of input and output kernels as the number of data points diverges.
\end{abstract}

\section{The Concept of Input and Output Kernels} \label{sec_intro}
T-distributed stochastic neighbor embedding (t-SNE) is a powerful technique for
visualizing high-dimensional data in a low-dimensional space, developed in 2008
by Laurens van der Maaten and Geoffrey Hinton (\cite{key}). Despite its wide
application (in biology for example \cite{LindermanGeorgeC.2019Fitf}, \cite{10.1093/bioinformatics/btp035}), the
theoretical aspects and mathematical rigor of t-SNE have rarely been discussed
thoroughly. Recently, Auffinger and Fletcher proved that if data is sampled
independently from a compact probability measure $\mu_X$, then the output of
t-SNE converges to an equilibrium measure that is also compact as the number of
data points diverges(\cite{auffinger-tsne}). This, together with
\cite{arora2018analysistsnealgorithmdata} and
\cite{linderman2017clusteringtsneprovably}, are some of the very first results
that rigorously establishes the theoretical guarantees of t-SNE.

We aim to generalize the results from \cite{auffinger-tsne}. In the original
version of the t-SNE algorithm, for each pair of input or output data points, a
weight is assigned based on the distance between them. This weight is used to
construct probability distributions that will be used in the loss function. A
natural question to ask is: can we generalize the result for different weights
such that the convergence still holds and the equilibrium measure still exists?
A generalized formulation allows us to further reveal the theoretical nature of
this algorithm, greatly enhancing our understanding of t-SNE and dimension
reduction. Our work starts by generalizing the algorithm with the concept of
input and output kernels.

Let $d,s \geq 2$ with $d>s$, and $n \geq 1$. Let $X_1, \cdots, X_n \in
    \mathbb{R}^d$ be the input data points and $Y_1, \cdots Y_n \in \mathbb{R}^s$
the output data points. We define the input and output kernels as follows:
\begin{definition} [Generalized Input Kernel] \label{def:input_kernel}
    The input kernel $\ki(x,y,\sigma): \mathbb{R}^d  \times \mathbb{R}^d \to \mathbb{R}^+$ is a function of the input data points and parameter $\sigma$ such that for any $x,y \in \mathbb{R}^d$ and $\sigma > 0$,
    \begin{align}
        \ki(x,y,\sigma)=\ki(y,x,\sigma).
    \end{align}
\end{definition}
\begin{definition} [Generalized Output Kernel] \label{def:output_kernel}
    The output kernel $\ko(z,w):\mathbb{R}^s \times \mathbb{R}^s \to \mathbb{R}^+$ is a function of the output data points such that for any $z,w \in \mathbb{R}^s$,
    \begin{align}
        \ko(z,w)=\ko(w,z).
    \end{align}
\end{definition}
Clearly, by letting
\begin{align}
    \ki(x,y,\sigma)=\exp\left(-\frac{||x-y||^2}{2\sigma^2}\right), \quad \ko(z,w)=\frac{1}{1+||z-w||^2},
\end{align}
we recover the original t-SNE algorithm by using both $\ki$ and $\ko$ as the weights for the input and output data points. We can also say that the original t-SNE algorithm has a Gaussian input kernel and a t-distributed output kernel.

From now on, we will use the terminology \underline{\textbf{\textit{generalized
            t-SNE}}} when we use the generalized formulation for input and output kernels
to distinguish it from the original algorithm.

Clearly, not all choices of $\ki$ and $\ko$ will yield a valid generalized
t-SNE algorithm, such as $\ki$ that diverges as $||x-y|| \to \infty$. To prove
similar results as in \cite{auffinger-tsne}, we need to impose some conditions
on both $\ki$ and $\ko$. The conditions should provide just enough constraints
for the convergence to hold, while applying to a wide range of function
choices. Towards this end, we use the following conditions for the rest of our
work. We will provide a detailed discussion on the generality of these
conditions in section \ref{tsne:sec:generality_conditions}.

\begin{condition}[Input Kernel Conditions] \label{input_kernel_conditions}
    Let $w: \mathbb{R}_{\geq 0} \to \mathbb{R}_{\geq 0}$ be a fixed function. The input kernel has the form of
    \begin{align} \label{tsne:input_kernel}
        \ki(x,x',\sigma) = \exp(-w(\sigma\norm{x-x'}^{\theta}))
    \end{align}
    Here, $\theta>0$ is a fixed constant. $w$ satisfies:

    1. $\forall t \geq 0$, $w'(t)>0$. $\lim_{t \to \infty} w(t) = \infty$.

    2. $w'(t)+tw''(t) \geq 0$ for all $t \geq 0$.

    3. The following integral converges:
    \begin{align}
        \int_0^{\infty} t^{d-1} w(t^{\theta}) \exp(-w(t^{\theta}))dt < \infty.
    \end{align}
\end{condition}
Since the all probability masses are normalized, we can assume without loss of generality that $w(0)=0$ from now on.
\begin{condition}[Output Kernel Conditions] \label{output_kernel_conditions}
    The output kernel depends only on the distance between two output data points $y$ and $y'$. In particular, $\ko(y,y') = k (\norm{y-y'})$ and:

    1. $k: \mathbb{R}_{\geq 0} \to \mathbb{R}^+$ is decreasing and bounded.

    2. $\ko$ satisfies
    \begin{align}
        \iint_{(0,\infty)^2} \ko(y,y') dy dy' < \infty.
    \end{align}

    3. $k$ has bounded derivative.

    4. $k'(0) = 0$.
\end{condition}

\subsection*{Setup and Notations} \label{tsne:sec:setup}
In this subsection, we provide an
introduction to the generalized t-SNE algorithm incorporating the new
terminology. Recall that the input of the generalized t-SNE consists of vectors
$X_1, \cdots, X_n \in \mathbb{R}^d$, with the purpose of finding vectors $Y_1,
    \cdots Y_n \in \mathbb{R}^s$ that serve as a good representation of the
original vectors. The algorithm starts by defining the following probability
masses:
\begin{align}
    p_{j|i} = \frac{\ki(X_i,X_j,\sigma_i)}{\sum_{k \neq i} \ki(X_i,X_k,\sigma_i)}.
\end{align}
Each $\sigma_i$ satisfies that for each $i = 1, \dots, n$,
\begin{align}
    -\sum_{j = 1, j \neq i}^n p_{j|i} \log p_{j|i} = \log \textbf{Perp} =: \log (n \rho).
\end{align}
$\textbf{Perp}$ is called the perplexity and $\rho \in (0,1)$ is a given constant. The purpose of $\sigma_i$ is to make sure that for each $i$, the probability distribution induced by $p_{j|i}$ has the same entropy across all $i$. Clearly, $p_{j|i}$ is not symmetric.
\begin{remark}
    For finite data set perplexity is usually set as a constant. Moreover, the scaling of perplexity with respect to $\log n$ is strictly necessary here. In \cite{murray2024largedatalimitsscaling}, it has been proved that if we let $\textbf{\normalfont Perp} = \log n^{\gamma} \rho$ for some constant parameter $\gamma$, we cannot prove convergence and the existence of equilibrium measures for neither $\gamma<1$ nor $\gamma >1$.
\end{remark}
We then define $p_{ii} = 0$ and for $1 \leq i \neq j \leq n$,
\begin{align}
    p_{ij} = \frac{p_{j|i} + p_{i|j}}{2n} = \frac{1}{2n}\left(\frac{\ki(X_i,X_j,\sigma_i)}{\sum_{k \neq i} \ki(X_i,X_k,\sigma_i)}+\frac{\ki(X_i,X_j,\sigma_j)}{\sum_{k \neq j} \ki(X_j,X_k,\sigma_j)}\right).
\end{align}
Obviously,
\begin{align}
    \sum_{\substack{j = 1 \\ j \neq i}}^n p_{j|i} = 1, \quad \sum_{1 \leq i \neq j \leq n} p_{ij} = 1.
\end{align}
As for the output vectors $Y_1, \cdots Y_n \in \mathbb{R}^s$, set for $1 \leq i \neq j \leq n$,
\begin{align}
    q_{ij} = \frac{\ko(Y_i,Y_j)}{\sum_{k \neq l}\ko(Y_k,Y_l)}.
\end{align}
The goal of the generalized t-SNE is to find $Y$ that minimizes the relative entropy
\begin{align}
    L_{n,\rho}(X,Y) = \sum_{1 \leq i \neq j \leq n} p_{ij} \log \left(\frac{p_{ij}}{q_{ij}}\right).
\end{align}
More precisely, the output of t-SNE $Y^*$ is defined as
\begin{align}
    Y^* = \argmin_{Y \in (\mathbb{R}^s)^n}L_n(X,Y).
\end{align}
In the expression of $L_n(X,Y)$, all $p_{ij}$ are given and $q_{ij}$ is a function of $Y_i$ and $Y_j$. Therefore, $Y^*$ can be found simply by using gradient descent. This completes the description of the algorithm itself. Now we introduce necessary notations for convergence and asymptotic analysis. Let $\mathcal{M}(\mathbb{R}^d)$ be the space of Borel probability measures on $\mathbb{R}^d$ and $\mu_X \in \mathcal{M}(\mathbb{R}^d)$ be a probability measure. Suppose all input $X = (X_1,\cdots,X_n)$ are i.i.d. random variables with probability measure $\mu_X$. Meanwhile, let $Y^* = (Y_1,\cdots,Y_n)$ be the output of the t-SNE algorithm. Consider the following empirical measure:
\begin{align}
    \mu_n := \frac{1}{n} \sum_{i=1}^n \delta(X_i,Y_i) \in \mathcal{M}(\mathbb{R}^d \times \mathbb{R}^s).
\end{align}
We are interested in its limiting measure as $n \to \infty$. To finish the setup, there are a few more remaining quantities we would like to introduce. Let $\mu \in \mathcal{M}(\mathbb{R}^d \times \mathbb{R}^s)$. For a given kernel $\ki(x,y)$, define $F_{\rho,\mu}(x,\sigma) : \mathbb{R}^d \times \mathbb{R}^+ \to \mathbb{R}$ as
\begin{align}
    F_{\rho,\mu}(x,\sigma) = \int \frac{\ki(x,x',\sigma)}{\int \ki(x,x',\sigma)d\mu(x')} \log \left(\frac{\ki(x,x',\sigma)}{\int \ki(x,x',\sigma)d\mu(x')}\right) d\mu(x') + \log \rho.
\end{align}
Note that the last $s$ variables in the integrands stay constant. It is not hard to see that $F_{\rho,\mu}(x,\sigma)$ is the continuous version of the entropy. Next, we extend the definition of $\sigma_i$ by defining $\sigma^*_{\rho,\mu} : \mathbb{R}^d \to \mathbb{R}$ the unique solution of the equation
\begin{align}
    F_{\rho,\mu}(x,\sigma) = 0.
\end{align}
The introduction of $\sigma^*$ aligns with the procedure of finding $\sigma_i$ that controls the entropy to be the same for all $i$. Of course, the existence and uniqueness of $\sigma^*$ is not trivial and will be discussed in section \ref{tsne:sec:existence_uniqueness}. For the symmetric probability mass, let $\psi: \mathbb{R}^d \to \mathbb{R}$ be a given function and $p_{\psi}: \mathbb{R}^d \times \mathbb{R}^d \to \mathbb{R}$ be defined by
\begin{align}
    p_{\psi}(x,x') = \frac{1}{2}\left( \frac{\ki(x,x',\psi(x))}{\int \ki(x,x',\psi(x))d\mu(x')} + \frac{\ki(x,x',\psi(x'))}{\int \ki(x,x',\psi(x'))d\mu(x)}\right).
\end{align}
For the extension on the output side, let $q: \mathbb{R}^s \times \mathbb{R}^s \to \mathbb{R}$ be given by
\begin{align}
    q(y,y') = \frac{\ko(y,y')}{\iint \ko(y,y')d\mu(y) d\mu(y')}.
\end{align}
Based on the construction of $p$ and $q$, define
\begin{align}
    I_{\rho}(\mu) = \iint p_{\sigma^*_{\rho,\mu}}(x,x') \log\left(\frac{p_{\sigma^*_{\rho,\mu}}(x,x')}{q(y,y')}\right) d\mu(x,y) d\mu(x',y').
\end{align}
Recall $\mu_X \in \mathcal{M}(\mathbb{R}^d)$ is an arbitrary given probability measure. We set $\mathcal{P}_X$ to be the set of probability measures on $\mathbb{R}^d \times \mathbb{R}^s$ whose marginal coincides with $\mu_X$. Formally speaking,
\begin{align}
    \mathcal{P}_X := \{\mu \in \mathcal{M}(\mathbb{R}^d \times \mathbb{R}^s): \text{the $\mathbb{R}^d$ marginal of $\mu$ is $\mu_X$}\}.
\end{align}
Lastly, define
\begin{align}
    \tilde{\mathcal{P}}_X := \left\{ \mu \in \mathcal{P}_X: \int (p_{\sigma^*_{\rho,\mu}}(x,x')-q(y,y'))\frac{\frac{\partial}{\partial y}\ko{(y,y')}}{\ko{(y,y')}} d\mu(x',y') = 0,\mu(x,y)\text{-a.e.}\right\}.
\end{align}
\section{Main Results} \label{tsne:sec:main_results}
Using the established setup, we are able to prove the same results as in
\cite{auffinger-tsne} with a broad class of input and output kernels. On top of
that, compared to the original paper, we no longer require the sample
probability measure $\mu_X$ to have $C^1$ density. We managed to relax the
condition to $C^0$.
\begin{theorem} \label{tsne:main_theorem_1}
    Assume $\mu_X$ has a continuous density $f(x)$ and compact support. Let $X_i \in \mathbb{R}^d$ be i.i.d random variables with probability measure $\mu_X$. If condition \ref{input_kernel_conditions} and condition \ref{output_kernel_conditions} hold, then for any $\rho \in (0,1)$,
    \begin{align}
        \lim_{n \to \infty} \inf_Y L_{n,\rho}(X,Y) = \inf_{\mu \in \tilde{\mathcal{P}}_X} I_{\rho}(\mu).
    \end{align}
    Moreover, there exist a measure $\mu^*$ and a sub-sequence $\{n_k\}$ such that $\mu_{n_k} \to \mu^*$ weakly and $I_{\rho}(\mu^*) = \inf_{\mu \in \tilde{\mathcal{P}}_X} I_{\rho}(\mu)$.
\end{theorem}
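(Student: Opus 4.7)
The plan is to prove Theorem~\ref{tsne:main_theorem_1} by a $\Gamma$-convergence style argument: obtain an asymptotic upper bound on $\inf_Y L_{n,\rho}(X,Y)$ via a random-sampling construction, a matching asymptotic lower bound via tightness of $\mu_n$ at the optimum together with weak lower semicontinuity of $I_\rho$, and finally deduce the first-order identity that places any weak sub-sequential limit $\mu^*$ inside $\tilde{\mathcal{P}}_X$. Before any limit passage, I would establish preliminary bounds on $\sigma^*_{\rho,\mu}$: under Condition~\ref{input_kernel_conditions}, the map $\sigma \mapsto F_{\rho,\mu}(x,\sigma)$ is strictly monotone because $w'(t)+tw''(t)\geq 0$, and the integrability assumption together with $\lim_{t \to \infty} w(t)=\infty$ forces a unique zero $\sigma^*_{\rho,\mu}(x)$; continuity of $f$ on its compact support then yields two-sided uniform bounds $0 < c_1 \leq \sigma^*_{\rho,\mu}(x) \leq c_2 < \infty$ as well as continuity of $\sigma^*_{\rho,\mu}(x)$ in $\mu$ under weak convergence of measures with marginal $\mu_X$. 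These bounds also control the discrete bandwidths $\sigma_i$ uniformly in $i,n$ after identifying each $\sigma_i$ with $\sigma^*_{\rho,\mu_n}(X_i)$ up to an $o(1)$ error.

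For the upper bound, given $\mu \in \tilde{\mathcal{P}}_X$, I would sample $Y_i$ conditionally on $X_i$ from the disintegration $\mu(\,\cdot\,|\,X_i)$, so that $\mu_n = \frac{1}{n}\sum_i \delta_{(X_i,Y_i)}$ converges weakly to $\mu$ almost surely by a uniform law of large numbers. Rewriting $L_{n,\rho}(X,Y)$ as an integral against $\mu_n \otimes \mu_n$ up to diagonal corrections of order $1/n$, and invoking the continuity and boundedness of $\ki,\ko$ from Conditions~\ref{input_kernel_conditions}--\ref{output_kernel_conditions} together with the uniform convergence $\sigma^*_{\rho,\mu_n}(X_i) \to \sigma^*_{\rho,\mu}(X_i)$, I would obtain $L_{n,\rho}(X,Y) \to I_\rho(\mu)$ almost surely, hence $\limsup_n \inf_Y L_{n,\rho}(X,Y) \leq I_\rho(\mu)$. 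Taking the infimum over $\mu \in \tilde{\mathcal{P}}_X$ yields one inequality.

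For the matching lower bound I would first prove tightness of $\mu_n$ along optimal $Y^*$: since $\sum_{i \neq j} p_{ij} = 1$ and $\ko$ is bounded and integrable by Condition~\ref{output_kernel_conditions}, any configuration $Y^*$ whose diameter diverges in $n$ would force the denominator in each $q_{ij}$ to vanish fast enough that $-\sum p_{ij} \log q_{ij}$ blows up, contradicting the upper bound; hence all $Y_i^*$ lie in a common ball. Extracting $\mu_{n_k} \Rightarrow \mu^*$, lower semicontinuity of relative entropy combined with the weak continuity of $\sigma^*_{\rho,\mu}$ in $\mu$ gives $\liminf_k L_{n_k,\rho}(X,Y^*) \geq I_\rho(\mu^*)$. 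Finally, the vanishing-gradient condition $\nabla_{Y_i} L_{n,\rho}(X,Y^*)=0$ at the optimum, rewritten as an average against $\mu_n$, passes to the limit via continuity of $(\nabla_y \ko)/\ko$ from Condition~\ref{output_kernel_conditions} (parts 3--4) and places $\mu^*$ inside $\tilde{\mathcal{P}}_X$, completing the equality.

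The main obstacle will be the lower-bound limit passage, where the $\log$ terms are unbounded and, unlike in \cite{auffinger-tsne}, one cannot rely on $C^1$ regularity of $f$ to differentiate $\sigma^*_{\rho,\mu}$ pointwise. I plan to circumvent this by approximating $f$ by $C^1$ mollifications, applying the existing $C^1$ theory to the approximants, and passing to the $C^0$ limit using the uniform estimates inherited from Condition~\ref{input_kernel_conditions} (especially monotonicity in $\sigma$ and the integrability of $t^{d-1} w(t^\theta)\exp(-w(t^\theta))$), so that the gradient identity and the weak continuity of $\sigma^*$ survive the limit without ever requiring pointwise differentiability of $f$.
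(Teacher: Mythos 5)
Your overall architecture --- a recovery-sequence upper bound obtained by sampling $Y_i$ from the disintegration of a candidate measure, a compactness/lower-semicontinuity lower bound along optimal configurations, and the vanishing-gradient identity to place the sub-sequential limit in $\tilde{\mathcal{P}}_X$ --- is the same as the paper's (Propositions \ref{upper_bound_prop} and \ref{lower_bound_prop} together with Lemma \ref{appendix_lemma}, combined by a sub-subsequence argument). However, there is a genuine gap in your upper bound. You assert that ``continuity and boundedness of $\ki,\ko$'' yield $L_{n,\rho}(X,Y)\to I_\rho(\mu)$ for an \emph{arbitrary} $\mu\in\tilde{\mathcal{P}}_X$, and then take the infimum over $\mu$. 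But the quantity that must converge involves the logarithm of the kernel, not the kernel itself: the term $-\iint p_n(x,x')\log \ko(y,y')\,d\mu_n\,d\mu_n$ has an integrand that is unbounded whenever the $Y$-marginal of $\mu$ has unbounded support, since $\ko(y,y')\to 0$ as $\norm{y-y'}\to\infty$ by Condition \ref{output_kernel_conditions}. Weak convergence $\mu_n\to\mu$ plus boundedness of $\ko$ does not justify the limit passage for this term, and for general $\mu\in\tilde{\mathcal{P}}_X$ the value $I_\rho(\mu)$ may even fail to be finite. The paper avoids exactly this by constructing the recovery sequence only at the minimizer $\mu^*$ supplied by Lemma \ref{mu_star_lemma} and by making Proposition \ref{upper_bound_prop} explicitly conditional on $\mu^*$ having compact support in $Y$ (which bounds $\log \ko$ on the relevant set); the compactness is then furnished by the argument of Theorem \ref{tsne:main_theorem_2}. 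Your proposal needs either the same restriction or a separate truncation argument showing that measures with non-compact $Y$-support are irrelevant to the infimum.

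Two smaller points. In your tightness argument the blow-up comes from the \emph{numerator} $\ko(Y_i,Y_k)\to 0$ for pairs involving an escaping point, so that $\log q_{ik}\to-\infty$; the denominator $\sum_{m\neq l}\ko(Y_m,Y_l)$ stays bounded below by the contributions of the non-escaping pairs. This is how Lemma \ref{lower_bound_lemma_1} argues, and note that it only yields boundedness of the minimizer for each fixed $n$; uniform-in-$n$ tightness of $\mu_n$ additionally requires weighing the logarithmic blow-up against the $O(1/n)$ total $p$-mass carried by an escaping point. Finally, the mollification detour for the $C^0$ density is unnecessary: the paper's existence and monotonicity results for $\sigma^*$ are proved directly under a continuous bounded density via dominated convergence, and no pointwise differentiation of $f$ is ever required.
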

\begin{theorem} \label{tsne:main_theorem_2}
    Under the same assumptions of theorem \ref{tsne:main_theorem_1}, $\mu^*$ has compact support.
\end{theorem}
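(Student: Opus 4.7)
The plan is a proof by contradiction. Suppose $\operatorname{supp}(\mu^*)$ is not compact; since $\mu_X$ has compact support the $\mathbb{R}^d$-marginal of $\mu^*$ is bounded, so the obstruction must be that its $y$-marginal charges arbitrarily large $\|y\|$. The central object is the ``first-variation potential''
\[
\Psi(x,y) := -\int p_{\sigma^*_{\rho,\mu^*}}(x,x')\,\log \ko(y,y')\,d\mu^*(x',y')
\;+\;\frac{1}{Z}\int \ko(y,y')\,d\mu^*(x',y'),
\]
where $Z=\iint \ko\,d\mu^*d\mu^*$. Because $\sigma^*_{\rho,\mu}$ depends only on the $x$-marginal, one may rewrite $I_\rho(\mu)=A(\mu_X)-\iint p_{\sigma^*}\log \ko\,d\mu\,d\mu+\log Z(\mu)$, and a direct computation shows that the first variation of $I_\rho$ at $\mu^*$ along any direction $\eta$ with zero $x$-marginal equals $2\int \Psi\,d\eta$; in particular $\nabla_y\Psi$ is, up to sign, exactly the integrand in the condition defining $\tilde{\mathcal{P}}_X$.

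The key analytic step is to show $\Psi(x,y)\to+\infty$ as $\|y\|\to\infty$, uniformly for $x\in\operatorname{supp}(\mu_X)$. Conditions \ref{output_kernel_conditions}(1)-(2) together force $k(r)\to 0$ as $r\to\infty$; hence the second term of $\Psi$ is bounded by $\|k\|_\infty/Z$ and in fact tends to $0$ by dominated convergence. For the first term, $-\log \ko(y,y')=-\log k(\|y-y'\|)\to+\infty$ pointwise in $y'$; the exponential form in Condition \ref{input_kernel_conditions} guarantees $p_{\sigma^*_{\rho,\mu^*}}(x,x')$ is bounded below by a positive constant on $\operatorname{supp}(\mu_X)^2$, and $-\log \ko$ is bounded below by $-\log\|k\|_\infty$, so Fatou's lemma yields the divergence uniformly in $x$.

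To conclude I would bootstrap the first-order condition into the global upper bound $\Psi(x,y)\leq c(x)<\infty$ on $\operatorname{supp}(\mu^*)$. By approximating any $\mu\in\mathcal{P}_X$ with configurations realizing the empirical limit and invoking Theorem \ref{tsne:main_theorem_1}, one obtains $\inf_{\mathcal{P}_X}I_\rho=\inf_{\tilde{\mathcal{P}}_X}I_\rho$, so $\mu^*$ is actually a minimizer of $I_\rho$ over the full class $\mathcal{P}_X$. The convex perturbation $(1-t)\mu^*+t\nu$, $\nu\in\mathcal{P}_X$, then yields $\int\Psi\,d\nu\geq\int\Psi\,d\mu^*$; taking $\nu_x=\delta_{y(x)}$ for arbitrary measurable $y(\cdot):\operatorname{supp}(\mu_X)\to\mathbb{R}^s$ gives $\Psi(x,y)=c(x):=\min_{y'}\Psi(x,y')$ for $\mu^*$-a.e.\ $(x,y)$, and $\Psi(x,y)\geq c(x)$ everywhere. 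Combined with the divergence of $\Psi$, the $y$-section $\{y:(x,y)\in\operatorname{supp}(\mu^*)\}$ lies in the bounded sublevel set $\{\Psi(x,\cdot)\leq c(x)\}$; compactness of $\operatorname{supp}(\mu_X)$ and continuity of $c$ give a uniform radius $R_0$, so $\operatorname{supp}(\mu^*)\subset\operatorname{supp}(\mu_X)\times\bar{B}_{R_0}$ is compact.

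The main obstacle is making this Euler--Lagrange step rigorous in the generality of Conditions \ref{input_kernel_conditions}-\ref{output_kernel_conditions}: justifying differentiation under the integral, controlling the logarithmic singularity of $\log \ko$ using only the stated integrability and monotonicity, and verifying the density identity $\inf_{\mathcal{P}_X}I_\rho=\inf_{\tilde{\mathcal{P}}_X}I_\rho$ through a careful approximation argument. An alternative, more bare-hands route compares $\mu^*$ directly against its pushforward $(\mathrm{id}\times T_R)_*\mu^*$ under the non-expansive radial retraction $T_R$ onto $\bar{B}_R$: one must then show that the strict cross-entropy gain $-\iint p_{\sigma^*}[\log \ko(T_R(y),T_R(y'))-\log \ko(y,y')]\,d\mu^*\,d\mu^*$ dominates the modest increase in $\log Z$, which sidesteps the density identity at the cost of extra bookkeeping to keep the competitor in (or close to) $\tilde{\mathcal{P}}_X$.
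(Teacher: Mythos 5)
Your core mechanism is the same as the paper's: introduce the first-variation potential $\Psi(x,y)=\int p(x,x')\log\bigl(\ko(y,y')^{-1}\bigr)d\mu^*+Z^{-1}\int \ko(y,y')d\mu^*$, use the stationarity condition to show it depends on $x$ alone, and contradict this with the divergence $\Psi(x,y)\to\infty$ as $\norm{y}\to\infty$ (which both you and the paper obtain from $p\geq c>0$ on the compact support of $\mu_X$ and $k(r)\to 0$). This is exactly Lemma \ref{home_stretch_lemma} plus the concluding computation in the paper.

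The difference, and the one genuine weak point, is your route to the Euler--Lagrange identity. You propose to first prove $\inf_{\mathcal{P}_X}I_\rho=\inf_{\tilde{\mathcal{P}}_X}I_\rho$, upgrade $\mu^*$ to a minimizer over all of $\mathcal{P}_X$, and then run a convex perturbation $(1-t)\mu^*+t\nu$ to get $\Psi(x,y)=c(x)$ $\mu^*$-a.e.\ together with $\Psi(x,y)\geq c(x)$ everywhere. The density identity is not established anywhere in the paper and is not obviously true (you yourself flag it as the main obstacle); moreover the global inequality $\Psi\geq c(x)$ is not needed. The paper sidesteps all of this: $\mu^*\in\tilde{\mathcal{P}}_X$ is already known (Theorem \ref{tsne:main_theorem_1} via Lemma \ref{appendix_lemma}), and membership in $\tilde{\mathcal{P}}_X$ \emph{is} the statement $\nabla_y\Psi=0$ $\mu^*$-a.e., which integrates up to $\Psi(x,y)=h_\mu(x)$ with $h_\mu$ continuous on $\operatorname{supp}\mu_X$. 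The remaining a.e.-versus-support subtlety is then handled not pointwise but by integrating the identity against $\mathbbm{1}_{\{\norm{y}\geq n\}}/\mu^*(\norm{y}\geq n)\,d\mu^*$: the right-hand side stays bounded uniformly in $n$ while the left-hand side is bounded below by $c\log(1+C(n))\,\mu^*(\norm{Y'}\leq n/2)\to\infty$. I would recommend replacing your first and third steps by this shorter argument; your middle step (the uniform divergence of $\Psi$ via Fatou and the lower bound on $p$) is fine as written.
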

\begin{remark}
    In \cite{auffinger-tsne}, several lemmas are proved under the assumption that the sample distribution $\mu_X$ is sub-Gaussian. This can also be done for our work, if we replace condition \ref{input_kernel_conditions} with the following condition \ref{input_kernel_conditions_ext}:
    \begin{condition}[Alternative Input Kernel Conditions] \label{input_kernel_conditions_ext}
        Let $w: \mathbb{R}_{\geq 0} \to \mathbb{R}_{\geq 0}$ be some fixed function. The input kernel has the form of $\ki(x,x,\sigma) = \exp(-w(\sigma\norm{x-x'}^{\theta}))$. Here, $\sigma$ is used to control entropy (thus perplexity) and $\theta>0$ is a fixed constant. Define
        \begin{align} \label{tsne:kw_def}
            \mathcal{K}_w = \left\{\tilde{C} \in \mathbb{R}_{\geq 0} \bigg\rvert \exists \epsilon>0, \text{\normalfont s.t.} \lim_{n \to \infty} n^{-1/2} \exp \left(6 w\left(\left(\tilde{C} \sqrt{\log n} \right)^{\theta} \right) \right) = O(n^{-\epsilon}) \right\}.
        \end{align}
        Then $w$ satisfies:

        1. $\forall t \geq 0$, $w'(t)>0$. $\lim_{t \to \infty} w(t) = \infty$;

        2. $w'(t)+tw''(t) \geq 0$ for all $t \geq 0$;

        3. The following integral converges:
        \begin{align}
            \int_0^{\infty} t^{d-1} w(t^{\theta}) \exp(-w(t^{\theta}))dt < \infty;
        \end{align}

        4. $\sup \mathcal{K}_w > 0$, and $\forall t \geq 0, w'(t) \leq w(t)$.
    \end{condition}
    We will not include proofs for lemmas assuming condition \ref{input_kernel_conditions_ext} and sub-Gaussian samples, because such generalization will not carry onto the main theorems. One could adapt the proofs we present assuming $\mu_X$ being sub-Gaussian and condition \ref{input_kernel_conditions_ext}. However, it remains unknown that whether both theorem \ref{tsne:main_theorem_1} and \ref{tsne:main_theorem_2} hold assuming a more general data distribution.
\end{remark}
\section{The Generality of Conditions} \label{tsne:sec:generality_conditions}
Since the main goal of our work is to generalize the theoretical guarantees of
the t-SNE algorithm, it is important that we do not lose comprehensiveness when
assuming a concrete form of both the input and output kernels. Therefore, both
condition \ref{input_kernel_conditions} and \ref{output_kernel_conditions}
warrant an explanation.

For the formula of the input kernel in definition \ref{def:input_kernel}, we
start by realizing that $t$-SNE is about the relationship between all data
points. The insights we gain from the algorithm should be translation and
rotation invariant with respect to the coordinate system we choose. Hence, it
is only reasonable to assume that the input kernel is a function of
$\norm{x-x'}$ for each pair of input data points $x,x'$.

On the other hand, the weight we assign to each pair of data points is always
positive due to the nature of probability masses. We thus lose no generality by
assuming the input kernel has an exponential form because obviously, the
exponential function is a smooth bijection from $\mathbb{R}$ to $\mathbb{R}^+$.
The same above can be said for the output kernel.

Additionally, though it is coincidental that in the original version of the
algorithm $\sigma_i$ can be viewed as the variance of a Gaussian random
variable, in reality $\sigma_i$'s are only used to control the entropy of each
probability distribution induced by $x_i$. With that being said, we only need
to introduce $\sigma$ in such a way that it is able to control the entropy for
each $x$. As we can see in section \ref{tsne:sec:existence_uniqueness}, simply
multiplying $\sigma$ by $\norm{x-x'}^{\theta}$ is enough to achieve this goal.
Moreover, we have the convergence of $\sigma^*$ in section
\ref{tsne:sec:input_kernel_convergence}, providing the anchor of the following
results just as in \cite{auffinger-tsne}.

Combining all the above, we arrive at the form of the input kernel in
\eqref{tsne:input_kernel}, by adding a general function $w$ to the product
$\sigma \norm{x-x'}^{\theta}$. This gives us a kernel we can easily work on,
without losing any generality.

Besides the form of the input kernel, we also want to make sure that the
constraints we impose on the kernels in condition \ref{input_kernel_conditions}
and \ref{output_kernel_conditions} are also general enough for the significance
of our results. Since we frequently deal with integrals containing the input
and output kernels, their integrability is the bare minimum we should assume.
Along with integrability comes the requirement of their decay to 0 at infinity.
Apart from that, we do not add any excessive constraints to the kernels and
readers will see later in proofs that those assumptions are strictly necessary
for the results to hold.

Specifically, notice the second assumption of condition
\ref{input_kernel_conditions} is basically requiring the input kernel to decay
at least at a minimal speed by controlling its second order derivative. If we
let $w(t) = \log t$, we have $w'(t)+tw''(t) = 0$. Roughly speaking, this
implies that the main theorems hold as long as the input kernel decays at
polynomial rate or faster, which is a big step up than proving the convergence
for Gaussian kernels alone.
\section{Summary and Structure of Proofs} \label{tsne:sec:summary_structure}
Before we move onto mathematical details, we dedicate this short section to
give a summary of our work and the structure of the proofs. Since t-SNE assigns
a ``weight" to each pair of data points using Gaussian and $t$-distribution, we
propose the concept of generalized kernels so that the generalized t-SNE can
use a wide range of functions to define the weights, thus the subsequent
probability distributions associated with the input and output data points. A
natural question to ask is: under which conditions can we prove the convergence
of the generalized t-SNE algorithm towards an equilibrium measure? Our work
here answers the questions by providing a concrete formulation of the
generalized kernels as well as rigorous proofs of the convergence towards the
equilibrium measure.

The generality of the kernel formulation as well as the conditions we impose
are discussed in the previous section. For the proofs, we first establish the
existence and uniqueness of $\sigma^*$ as the asymptotic limit of $\sigma_i$'s
in section \ref{tsne:sec:existence_uniqueness}. Then we state the uniform
convergence of $\sigma^*$ in section \ref{tsne:sec:input_kernel_convergence}.
$\sigma^*$ serves as the anchor point of all subsequent convergence results.
With the existence and uniqueness of $\sigma^*$ as well as its convergence
established, we are able to prove the convergence towards the equilibrium
measure by proving inequalities from both sides, the details of which are
discussed in section \ref{tsne:sec:lower_bound} and \ref{tsne:sec:upper_bound}.
Finally the proofs are wrapped up in section \ref{tsne:sec:main_theorem_1} and
\ref{tsne:sec:main_theorem_2}, where we show that the equilibrium measure also
has compact support just as the input measure $\mu_X$.
\section{Existence and Uniqueness of \texorpdfstring{$\sigma^*$}{sigma-star}}\label{tsne:sec:existence_uniqueness}
In this section we establish the existence and uniqueness of $\sigma^*_{\rho,\mu}(x)$. Recall that $\sigma^*_{\rho,\mu}(x)$ is the unique solution of the equation
\begin{align}
    F_{\rho,\mu}(x,\sigma) = \int \frac{\ki(x,x',\sigma)}{\int \ki(x,x',\sigma)d\mu(x')} \log \left(\frac{\ki(x,x',\sigma)}{\int \ki(x,x',\sigma)d\mu(x')}\right) d\mu(x') + \log \rho = 0
\end{align}
Intuitively speaking, $\sigma^*$ is the limit function of $\sigma_i$'s that are used to control the entropy. The purpose of this section is to prove that such limit exists and is unique for each given $x$. The existence and uniqueness of $\sigma^*$ will become the foundation of everything we will prove afterwards. By condition \ref{input_kernel_conditions}, we can write
\begin{align} \label{big_F_formula}
      & F_{\rho,\mu}(x,\sigma) \nonumber                                                                                                                                                                                                                    \\
    = & \int \frac{\exp(-w(\sigma\norm{x-x'}^{\theta}))}{\int \exp(-w(\sigma\norm{x-x'}^{\theta})) d\mu(x')} \log\left(\frac{\exp(-w(\sigma\norm{x-x'}^{\theta}))}{\int \exp(-w(\sigma\norm{x-x'}^{\theta})) d\mu(x')}\right)d\mu(x') + \log \rho \nonumber \\
    = & -\int \frac{w(\sigma\norm{x-x'}^{\theta})\exp(-w(\sigma\norm{x-x'}^{\theta}))}{\int \exp(-w(\sigma\norm{x-x'}^{\theta})) d\mu(x')} d\mu(x') \nonumber                                                                                               \\
      & \qquad \qquad \qquad \qquad - \log \left(\int \exp(-w(\sigma\norm{x-x'}^{\theta}))d\mu(x')\right)  + \log \rho.
\end{align}
We start with the lemma that gives $F$ an increasing lower bound.
\begin{lemma} \label{existence_uniqueness:existence_lemma}
    If $\mu$ has a continuous, bounded density $f(x)$, then there exists a constant $C>0$ such that
    \begin{align}  \label{big_F_lowerbound_final}
        F_{\rho,\mu}(x,\sigma) \geq C \log \sigma - \log f(x) + \log \rho.
    \end{align}
\end{lemma}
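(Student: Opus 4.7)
The plan is to bound the two pieces in the decomposition $F_{\rho,\mu}(x,\sigma) = -E_p[w] - \log Z + \log\rho$ given in \eqref{big_F_formula} separately, where $Z := \int e^{-w(\sigma\|x-x'\|^\theta)} f(x') dx'$ and $E_p[w] := Z^{-1} \int w(\sigma\|x-x'\|^\theta) e^{-w(\sigma\|x-x'\|^\theta)} f(x') dx'$. The unifying tool will be the rescaling $u = \sigma^{1/\theta}(x'-x)$: this turns $\sigma\|x-x'\|^\theta$ into $\|u\|^\theta$ and yields $dx' = \sigma^{-d/\theta} du$, so $Z = \sigma^{-d/\theta} \int e^{-w(\|u\|^\theta)} f(x + \sigma^{-1/\theta} u) du$. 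Condition \ref{input_kernel_conditions} (parts 1 and 3) guarantees that $I := \int e^{-w(\|u\|^\theta)} du$ and $J := \int w(\|u\|^\theta) e^{-w(\|u\|^\theta)} du$ are both finite.

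First I would lower-bound $-\log Z$. The naive estimate $f \leq \|f\|_\infty$ already gives $Z \leq \|f\|_\infty I \sigma^{-d/\theta}$ and thus $-\log Z \geq (d/\theta)\log\sigma - \log\|f\|_\infty - \log I$, producing the correct $C\log\sigma$ growth but with $-\log\|f\|_\infty$ in place of $-\log f(x)$. To sharpen, I would localize: pick $r = r(\sigma) = \sigma^{-\alpha}$ for $\alpha > 0$ small and split $Z = Z_{\text{near}} + Z_{\text{far}}$ across $\|x-x'\| \leq r$. The near piece is controlled by $\sup_{B_r(x)} f \cdot \sigma^{-d/\theta} I$, and $\sup_{B_r(x)} f \to f(x)$ as $r \to 0$ by continuity of $f$ at $x$; the far piece is controlled by the tail $e^{-w(\sigma r^\theta)}$. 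A quantitative consequence of Condition \ref{input_kernel_conditions}(3) is that $w(s) \gtrsim ((d+\epsilon)/\theta)\log s$ for some $\epsilon > 0$, so choosing $\alpha \in (0, \epsilon/((d+\epsilon)\theta))$ forces $e^{-w(\sigma^{1-\alpha\theta})} = o(\sigma^{-d/\theta})$ and thus $Z \leq (f(x) + o(1)) I \sigma^{-d/\theta}$ as $\sigma \to \infty$.

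Next I would upper-bound $E_p[w]$. Its numerator is bounded by $\|f\|_\infty J \sigma^{-d/\theta}$ via the same rescaling; for the denominator I would use a matching lower bound $Z \geq c f(x) \sigma^{-d/\theta}$, obtained by restricting the $u$-integral to a small ball on which $f(x + \sigma^{-1/\theta} u) \geq f(x)/2$ for $\sigma$ large, again by continuity. This delivers $E_p[w] \leq c_1 \|f\|_\infty/f(x)$, bounded uniformly in $\sigma$ (for $x$ where $f(x)$ is bounded below). Combining the two steps yields $F \geq (d/\theta)\log\sigma - \log f(x) - C_2 + \log\rho$ for all sufficiently large $\sigma$, and any $C < d/\theta$ then absorbs the additive constant into the slack $(d/\theta - C)\log\sigma$.

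For small and moderate $\sigma$ the claim is easy: $F \to \log\rho$ as $\sigma \to 0$ (the kernel becomes uniform, $E_p[w] \to 0$, and $\log Z \to 0$), while the target $C\log\sigma - \log f(x) + \log\rho$ tends to $-\infty$, so continuity of $F$ in $\sigma$ closes the remaining gap after shrinking $C$ if necessary. The main obstacle I expect is precisely the localization in the $-\log Z$ step: the cutoff $r = \sigma^{-\alpha}$ must be small enough that the modulus of continuity of $f$ brings $\sup_{B_r(x)} f$ close to $f(x)$, yet large enough that $\sigma r^\theta \to \infty$ quickly enough for $e^{-w(\sigma r^\theta)}$ to beat $\sigma^{-d/\theta}$. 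This quantitative balance forces one to unpack the full strength of Condition \ref{input_kernel_conditions}(3), which implicitly encodes a super-logarithmic lower bound on $w$ at infinity, and is what prevents a proof with $C = d/\theta$ exactly rather than any strictly smaller constant.
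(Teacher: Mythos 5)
Your proposal follows essentially the same route as the paper: the same decomposition of $F_{\rho,\mu}$ into $-E_p[w]-\log Z+\log\rho$, the same rescaling $u=\sigma^{1/\theta}(x'-x)$, identification of the limit $f(x)$ for the normalized $Z$, and a uniform-in-$\sigma$ bound on the entropy term via the convergence of $\int w(\|u\|^\theta)e^{-w(\|u\|^\theta)}du$; the paper simply invokes dominated convergence where you localize with a cutoff $r=\sigma^{-\alpha}$ (and you are more careful than the paper about the small-$\sigma$ regime). One caveat: your claimed ``quantitative consequence'' that Condition \ref{input_kernel_conditions}(3) forces $w(s)\gtrsim((d+\epsilon)/\theta)\log s$ is not actually implied (e.g.\ $w(t^\theta)=d\log t+3\log\log t$ satisfies the integrability condition but admits no such $\epsilon$), so the pointwise bound $e^{-w(\sigma r^\theta)}=o(\sigma^{-d/\theta})$ can fail; however, the far piece is still $o(\sigma^{-d/\theta})$ because after rescaling it equals $\|f\|_\infty\sigma^{-d/\theta}\int_{\|u\|>\sigma^{1/\theta}r}e^{-w(\|u\|^\theta)}du$ and the tail of the convergent integral vanishes, which is exactly the dominated-convergence argument the paper uses, so the overall proof stands.
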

\begin{proof}
    Notice that for any integrable function $h$ on $\mathbb{R}_{\geq 0}$, we have
    \begin{align}
         & \int_{\mathbb{R}^d} h(\norm{x}) dx = \int_{\mathbb{S}^{d-1}} \int_0^{\infty} r^{d-1}h(r) dr d\nu=S_{d-1} \int_0^{\infty} r^{d-1}h(r) dr,
    \end{align}
    where $\mathbb{S}^{d-1}$ is the $(d-1)$-dimensional sphere, $d\nu$ the corresponding surface measure, and $S_{d-1}$ the volume of the sphere. By Condition \ref{input_kernel_conditions}, we know that
    \begin{align}
        \int_0^{\infty} t^{d-1} w(t^{\theta}) \exp(-w(t^{\theta}))dt < \infty.
    \end{align}
    Since $w$ is increasing and $\lim_{t \to \infty} w(t) = \infty$, it is easy to see that
    \begin{align}
        \int_0^{\infty} t^{d-1} \exp(-w(t^{\theta}))dt < \infty.
    \end{align}
    Denote
    \begin{align}
        Z_d :=\int_{\mathbb{R}^d}\exp(-w(\norm{\tau}^{\theta})) d \tau = S_{d-1} \int_0^{\infty} t^{d-1}\exp(-w(t^{\theta})) dt.
    \end{align}
    Consider
    \begin{align}
          & \int \exp(-w(\sigma\norm{x-x'}^{\theta}))d\mu(x') = \int \exp(-w(\sigma\norm{x-x'}^{\theta})) f(x')dx' \nonumber \\
        = & Z_d \sigma^{-d/\theta}\int \frac{\exp(-w(\sigma\norm{x-x'}^{\theta}))}{Z_d \sigma^{-d/\theta}} f(x')dx'.
    \end{align}
    Because $\lim_{x \to \infty} w(x) = \infty$, the integrand of the above integral converges weakly to $\delta_x f(x')$. To prove this statement, notice that
    \begin{align}
               & \int \frac{\exp(-w(\sigma\norm{x-x'}^{\theta}))}{Z_d \sigma^{-d/\theta}} f(x')dx' \nonumber                                          \\
        =      & \frac{1}{Z_d} \int \exp\left(-w \left(\norm{\sigma^{1/\theta}(x-x')}^{\theta}\right)\right) f(x') d\sigma^{1/\theta}(x-x') \nonumber \\
        \equiv & \frac{1}{Z_d} \int \exp(-w(\norm{\tau}^{\theta})) f(x-\tau \sigma^{-1/\theta}) d\tau.
    \end{align}
    Since $f$ is bounded, $\exp(-w(\norm{\tau}^{\theta})) \sup f$ is integrable. By Dominant Convergence Theorem and the continuity of $f$, we have
    \begin{align}
          & \lim_{\sigma \to \infty} \frac{1}{Z_d} \int \exp(-w(\norm{\tau}^{\theta})) f(x-\tau \sigma^{-1/\theta}) d\tau \nonumber \\
        = & \frac{1}{Z_d} \int \lim_{\sigma \to \infty} \exp(-w(\norm{\tau}^{\theta})) f(x-\tau \sigma^{-1/\theta}) d\tau \nonumber \\
        = & f(x) \frac{1}{Z_d} \int \exp(-w(\norm{\tau}^{\theta})) d\tau \nonumber                                                  \\
        = & f(x).
    \end{align}
    There exists some $\epsilon(\sigma) \to 0$ as $\sigma\to \infty$ and a constant $C>0$, that
    \begin{align} \label{big_f_lowerbound_1}
        -\log \left(\int \exp(-w(\sigma\norm{x-x'}^{\theta}))d\mu(x')\right) = & -\log\left(Z_d \sigma^{-d/\theta}\right) - \log f(x)+ \epsilon(\sigma) \nonumber \\
        \geq                                                                   & C\log \sigma - \log f(x).
    \end{align}
    Next, since $f$ is bounded, there exists $M>0$ such that $f \leq M$.
    \begin{align} \label{big_F_lowerbound_2}
             & \int \frac{w(\sigma\norm{x-x'}^{\theta})\exp(-w(\sigma\norm{x-x'}^{\theta}))}{\int \exp(-w(\sigma\norm{x-x'}^{\theta})) d\mu(x')} d\mu(x') \nonumber                                                                  \\
        \leq & M \frac{\int w(\sigma\norm{x-x'}^{\theta})\exp(-w(\sigma\norm{x-x'}^{\theta}))dx'}{\int \exp(-w(\sigma\norm{x-x'}^{\theta}))f(x') dx'} \nonumber                                                                      \\
        =    & M \frac{\sigma^{-d/\theta}\int w(\norm{\sigma^{1/\theta}(x-x')}^{\theta})\exp(-w(\norm{\sigma^{1/\theta}(x-x')}^{\theta}))d\sigma^{1/\theta}(x-x')}{ \int \exp(-w(\sigma \norm{(x-x')}^{\theta}))f(x') dx'} \nonumber \\
        =    & M \frac{\sigma^{-d/\theta}\int w(\norm{\tau}^{\theta})\exp(-w(\norm{\tau}^{\theta}))d\tau}{ \int \exp(-w(\sigma \norm{(x-x')}^{\theta}))f(x') dx'} \nonumber                                                          \\
        \to  & \frac{M}{Z_d f(x)} \int w(\norm{\tau}^{\theta})\exp(-w(\norm{\tau}^{\theta}))d\tau < \infty.
    \end{align}
    The numerator converges because of condition \ref{input_kernel_conditions}. As the bound in (\ref{big_F_lowerbound_2}) does not depend on $\sigma$, we can combine (\ref{big_F_formula}), (\ref{big_f_lowerbound_1}) and (\ref{big_F_lowerbound_2}) and deduce that
    \begin{align}
        F_{\rho,\mu}(x,\sigma) \geq C \log \sigma - \log f(x) + \log \rho.
    \end{align}
\end{proof}
This lemma proves the existence of $\sigma^*_{\rho,\mu}(x)$. Indeed, by \eqref{big_F_formula}, apparently
\begin{align} \label{existence_uniqueness:limit_1}
    \lim_{\sigma \to 0} F_{\rho,\mu}(x,\sigma) = \log \rho < 0
\end{align}
since $\rho \in (0,1)$. Now with lemma \ref{existence_uniqueness:existence_lemma}, we get
\begin{align} \label{existence_uniqueness:limit_2}
    \lim_{\sigma \to \infty} F_{\rho,\mu}(x,\sigma) = \infty.
\end{align}
By continuity of $f$ and intermediate value theorem, $\sigma^*_{\rho,\mu}(x)$ exists. To prove the uniqueness, we are left to show $F_{\rho,\mu}$ is strictly increasing. The next lemma will do just that.
\begin{lemma} \label{existence_uniqueness:uniqueness_lemma}
    Given the same condition as in lemma \ref{existence_uniqueness:existence_lemma}, $F_{\rho,\mu}(x,\sigma)$ is strictly increasing in $\sigma$.
\end{lemma}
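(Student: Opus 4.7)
Plan: The approach is to differentiate $F_{\rho,\mu}(x,\sigma)$ in $\sigma$, rewrite the derivative as the covariance of two comonotone functions of $\norm{x-x'}$, and then invoke Chebyshev's sum inequality. For clarity, introduce the shorthands $u(x',\sigma) := w(\sigma\norm{x-x'}^\theta)$, $Z(\sigma) := \int e^{-u(x',\sigma)}\,d\mu(x')$, and $p_\sigma(x') := e^{-u(x',\sigma)}/Z(\sigma)$, so that \eqref{big_F_formula} reads
\[
F_{\rho,\mu}(x,\sigma) = -\int p_\sigma(x')\,u(x',\sigma)\,d\mu(x') \;-\; \log Z(\sigma) \;+\; \log \rho.
\]
The first step is to justify differentiation under the integral sign on any compact interval $[\sigma_0,\sigma_1]\subset(0,\infty)$. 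Using condition \ref{input_kernel_conditions}(3) together with $f$ bounded, both the integrands $u e^{-u}$, $e^{-u}$ and their $\sigma$-derivatives admit uniform integrable dominating functions, mirroring the estimates already used in Lemma \ref{existence_uniqueness:existence_lemma}.

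Next, I compute the derivative. The log-derivative identity gives $\partial_\sigma \log Z = -\e_{p_\sigma}[\partial_\sigma u]$, and a short calculation using $\partial_\sigma p_\sigma = p_\sigma(\e_{p_\sigma}[\partial_\sigma u] - \partial_\sigma u)$ yields $\partial_\sigma \e_{p_\sigma}[u] = \e_{p_\sigma}[\partial_\sigma u] - \mathrm{Cov}_{p_\sigma}(u,\partial_\sigma u)$. Plugging these into the formula for $F$ cancels the $\e_{p_\sigma}[\partial_\sigma u]$ terms and leaves
\[
\partial_\sigma F_{\rho,\mu}(x,\sigma) \;=\; \mathrm{Cov}_{p_\sigma}\bigl(u,\,\partial_\sigma u\bigr) \;=\; \frac{1}{\sigma}\,\mathrm{Cov}_{p_\sigma}\bigl(w(t),\,tw'(t)\bigr),
\]
where the reparametrization $t = \sigma \norm{x-x'}^\theta$ identifies $u = w(t)$ and $\partial_\sigma u = w'(t)\norm{x-x'}^\theta = tw'(t)/\sigma$.

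The key observation is that both $w(t)$ and $tw'(t)$ are non-decreasing functions of $\norm{x-x'}$: the former by condition \ref{input_kernel_conditions}(1), and the latter because $(tw'(t))' = w'(t) + tw''(t) \geq 0$ by condition \ref{input_kernel_conditions}(2). Writing the covariance via i.i.d.\ copies $X_1',X_2' \sim p_\sigma\,d\mu$ and setting $t_i = \sigma\norm{x-X_i'}^\theta$,
\[
2\,\mathrm{Cov}_{p_\sigma}\bigl(w(t),\,tw'(t)\bigr) \;=\; \e\bigl[(w(t_1)-w(t_2))\bigl(t_1 w'(t_1) - t_2 w'(t_2)\bigr)\bigr] \;\geq\; 0,
\]
since the two factors always have the same sign (Chebyshev's sum inequality). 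Strict positivity follows from the fact that $w$ is strictly increasing (so $w(t_1)\neq w(t_2)$ whenever $t_1 \neq t_2$) and that $tw'(t)$ vanishes at $t=0$ but is positive for $t>0$ because $w'(t)>0$, hence cannot be constant on a neighborhood of the origin. Combined with the non-atomicity of the pushforward of $p_\sigma\,d\mu$ under $x' \mapsto \sigma\norm{x-x'}^\theta$ — which is inherited from the continuous density of $\mu$ — the integrand above is strictly positive on a set of positive measure.

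The main obstacle I anticipate is this last strict-positivity step: one must rule out the degenerate scenario in which $tw'(t)$ happens to be constant on the effective support of $t$ under $p_\sigma\,d\mu$. The forced vanishing of $tw'(t)$ at the origin, paired with the density hypothesis on $\mu$, is precisely what closes this gap.
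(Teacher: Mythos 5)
Your proposal is correct and takes essentially the same route as the paper: differentiating $F_{\rho,\mu}$ in $\sigma$, recognizing the derivative as $\mathrm{Cov}_{p_\sigma}(w(t),\,tw'(t))/\sigma$, and concluding nonnegativity from the comonotonicity of $w(t)$ and $tw'(t)$ via the symmetrization identity — which is precisely the content and proof of the paper's Lemma \ref{miracle_lemma_ext}. The only substantive difference is that you explicitly attack strictness (the paper's argument as written only yields $\partial_\sigma F \geq 0$), and your observation that $tw'(t)$ vanishes at $t=0$ while being positive for $t>0$, combined with non-atomicity of the law of $t$, is the right ingredient to close that gap.
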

To tackle this proof, we need one additional lemma inspired by theorem 7 in \cite{liu2000inequ}.
\begin{lemma} \label{miracle_lemma_ext}
    Let $f(x),g(x),h(x)$ be continuous functions $\mathbb{R}^d \to \mathbb{R}^+$ such that for all $x,y \in \mathbb{R}^d$
    \begin{align} \label{miracle_condition_ext}
        \left(g(x)-g(y)\right)\left(\frac{f(y)}{h(y)}-\frac{f(x)}{h(x)}\right) \geq 0.
    \end{align}
    Let $\mu$ be an arbitrary Lebesgue measure and assume all integrals in this lemma converge. Then the inequality
    \begin{align} \label{miracle_inequ_ext}
        \frac{\int_{\mathbb{R}^d} f(x)d\mu}{\int_{\mathbb{R}^d} h(x)d\mu} \geq \frac{\int_{\mathbb{R}^d} f(x)g(x)d\mu}{\int_{\mathbb{R}^d} h(x)g(x)d\mu}
    \end{align}
    holds. If (\ref{miracle_condition_ext}) reverses, then (\ref{miracle_inequ_ext}) reverses.
\end{lemma}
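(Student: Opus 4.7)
My strategy is a standard Chebyshev/rearrangement-type symmetrization. The hypothesis (\ref{miracle_condition_ext}) is an inequality between quantities evaluated at two points $x,y$; to turn it into an inequality between integrals, I will multiply by a nonnegative weight that is symmetric in $x,y$, then integrate over $\mathbb{R}^d \times \mathbb{R}^d$ with respect to $d\mu(x)\,d\mu(y)$, and watch the four resulting terms pair up.

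The key step is to multiply (\ref{miracle_condition_ext}) by $h(x)h(y)$. Since $h > 0$, this factor is positive and does not flip the inequality, and crucially it cancels the denominators $h(x),h(y)$ appearing in $f/h$. Expanding yields
\begin{align*}
(g(x)-g(y))\!\left(\tfrac{f(y)}{h(y)}-\tfrac{f(x)}{h(x)}\right)\! h(x)h(y) = g(x)f(y)h(x) - g(x)f(x)h(y) - g(y)f(y)h(x) + g(y)f(x)h(y).
\end{align*}
Integrating against $d\mu(x)\,d\mu(y)$ and applying Fubini (which is justified by the convergence assumption), each of the four double integrals factorizes into a product of single integrals. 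Writing $A=\int f\,d\mu$, $B=\int h\,d\mu$, $C=\int fg\,d\mu$, $D=\int gh\,d\mu$, the first and fourth terms both contribute $AD$ and the middle two both contribute $BC$, so the total collapses to $2(AD-BC) \geq 0$.

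From $AD \geq BC$, dividing by $BD > 0$ (positivity of $g,h$ together with the convergence assumption ensures $B,D$ are strictly positive whenever the right-hand side of (\ref{miracle_inequ_ext}) is meaningful) yields $A/B \geq C/D$, which is exactly (\ref{miracle_inequ_ext}). The reverse case is handled by an identical computation: if (\ref{miracle_condition_ext}) flips sign, every step above flips, giving the reversed form of (\ref{miracle_inequ_ext}).

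\textbf{Expected obstacle.} There is no deep technical difficulty; the whole argument is a one-line symmetrization once the weight $h(x)h(y)$ is introduced. The only bookkeeping care needed is in verifying that the four product-integrals are separately finite so that Fubini applies and the algebra of $A,B,C,D$ is legitimate; this is covered by the explicit hypothesis that every integral in the lemma converges. The real content of the lemma is the choice of weight, which is dictated by wanting to clear the denominators in $f/h$ and produce an expression symmetric under $x \leftrightarrow y$.
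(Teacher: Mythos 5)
Your proof is correct and is essentially the same symmetrization argument as the paper's: the paper writes the target inequality as $\mathcal{G}:=\iint g(y)(f(x)h(y)-f(y)h(x))\,d\mu\,d\mu\geq 0$ and symmetrizes in $x,y$ to land on exactly the weighted form $\iint h(x)h(y)(g(x)-g(y))\bigl(\tfrac{f(y)}{h(y)}-\tfrac{f(x)}{h(x)}\bigr)\,d\mu\,d\mu$, which is your computation read in the opposite direction. No substantive difference.
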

\begin{proof}
    The proof of this lemma is similar to the one in \cite{liu2000inequ}. It suffices to show that
    \begin{align}
        \int_{\mathbb{R}^d} f(x)d\mu \int_{\mathbb{R}^d} h(x)g(x)d\mu \geq \int_{\mathbb{R}^d} h(x)d\mu \int_{\mathbb{R}^d} f(x)g(x)d\mu.
    \end{align}
    The above inequality is equivalent to
    \begin{align}
        \int_{\mathbb{R}^d} f(x)d\mu(x) \int_{\mathbb{R}^d} h(y)g(y)d\mu(y) \geq \int_{\mathbb{R}^d} h(x)d\mu(x) \int_{\mathbb{R}^d} f(y)g(y)d\mu(y).
    \end{align}
    That means we need to show
    \begin{align}
        \mathcal{G}:=\int_{\mathbb{R}^d} \int_{\mathbb{R}^d} g(y)(f(x)h(y)-f(y)h(x))d\mu(x)d\mu(y) \geq 0.
    \end{align}
    Swap $x$ and $y$ in the formula of $\mathcal{G}$ and we have an equivalent expression:
    \begin{align}
        \mathcal{G}=\int_{\mathbb{R}^d} \int_{\mathbb{R}^d} g(x)(f(y)h(x)-f(x)h(y))d\mu(x)d\mu(y).
    \end{align}
    Adding the last two formulas together yields
    \begin{align}
        2\mathcal{G} = & \int_{\mathbb{R}^d} \int_{\mathbb{R}^d} (g(x)-g(y))(f(y)h(x)-f(x)h(y))d\mu(x)d\mu(y) \nonumber                                              \\
        =              & \int_{\mathbb{R}^d} \int_{\mathbb{R}^d} h(x)h(y)\left(g(x)-g(y)\right)\left(\frac{f(y)}{h(y)}-\frac{f(x)}{h(x)}\right)d\mu(x)d\mu(y) \geq 0
    \end{align}
    by the hypothesis of the lemma. This finishes the proof.
\end{proof}
\begin{proof}[Proof of lemma \ref{existence_uniqueness:uniqueness_lemma}]
    We compute the derivative of $F_{\rho,\mu}(x,\sigma)$ directly using \eqref{big_F_formula}. The following simplified
    notations will be used in the first several steps:
    \begin{align}
        w = w(\sigma\norm{x-x'}^{\theta}), \quad w'=w'(\sigma\norm{x-x'}^{\theta}), \quad \exp(\cdot) = \exp(-w(\sigma\norm{x-x'}^{\theta})).
    \end{align}
    Omitting $d\mu(x')$ in the integration for cleaner writing, and we have
    \begin{align}
          & \frac{\partial}{\partial \sigma} F_{\rho,\mu}(x,\sigma) \nonumber                                                                                                                                    \\
        = & -\int \frac{w'\norm{x-x'}^{\theta}\exp(\cdot)}{\int \exp(\cdot)} -\int \frac{w\exp(\cdot)'}{\int \exp(\cdot)} +
        \int \frac{w \exp(\cdot) \left(\int \exp(\cdot)\right)'}{\left(\int \exp(\cdot)\right)^2} - \frac{\left(\int \exp(\cdot)\right)'}{\int \exp(\cdot)} \nonumber                                            \\
        = & -\int \frac{w\exp(\cdot)'}{\int \exp(\cdot)} +
        \int \frac{w \exp(\cdot) \left(\int \exp(\cdot)\right)'}{\left(\int \exp(\cdot)\right)^2} \quad \text{\normalfont (The first and fourth terms cancel.)} \nonumber                                        \\
        = & \int \frac{w\cdot w' \cdot \norm{x-x'}^{\theta} \cdot \exp(\cdot)}{\int \exp(\cdot)}-\frac{\int w \cdot \exp(\cdot) \int w'\cdot \norm{x-x'}^{\theta} \exp(\cdot)}{\left(\int \exp(\cdot)\right)^2}.
    \end{align}
    We are able to move differentiation inside the integral because of dominant convergence theorem. To show that $\frac{\partial}{\partial \sigma} F_{\rho,\mu}(x,\sigma) \geq 0$, it suffices to show
    \begin{align} \label{monotone_inequ}
        \int w\cdot w' \cdot \norm{x-x'}^{\theta} \cdot \exp(\cdot) \int \exp(\cdot) \geq \int w \cdot \exp(\cdot) \int w'\cdot \norm{x-x'}^{\theta} \exp(\cdot).
    \end{align}
    For each $x,y,x' \in \mathbb{R}$ if the equality
    \begin{align} \label{monotone_condition}
         & (w'(\sigma\norm{x-x'}^{\theta})\norm{x-x'}^{\theta}-w'(\sigma\norm{y-x'}^{\theta})\norm{y-x'}^{\theta}) \nonumber                                   \\
         & \qquad \qquad \qquad \qquad \qquad \qquad \cdot \left(\frac{1}{w(\sigma\norm{y-x'}^{\theta})}-\frac{1}{w(\sigma\norm{x-x'}^{\theta})}\right) \geq 0
    \end{align}
    is true, then setting $f(x) = \exp(\cdot)$, $h(x) = w \cdot \exp(\cdot)$ and $g(x) = w'\cdot \norm{x-x'}^{\theta}$ in lemma \ref{miracle_lemma_ext} shows that (\ref{monotone_inequ}) holds. Let $\norm{x-x'}^{\theta} > \norm{y-x'}^{\theta}$ without loss of generality. Then by monotonicity of $w$, we have $w(\sigma\norm{x-x'}^{\theta})>w(\sigma\norm{y-x'}^{\theta})$. For (\ref{monotone_condition}) to be true, we are left to show
    \begin{align}
        w'(\sigma\norm{x-x'}^{\theta})\norm{x-x'}^{\theta}-w'(\sigma\norm{y-x'}^{\theta})\norm{y-x'}^{\theta} \geq 0.
    \end{align}
    Since $\sigma >0$, the above inequality is equivalent to:
    \begin{align} \label{generalized_condition_inequ}
        w'(\sigma\norm{x-x'}^{\theta})\sigma \norm{x-x'}^{\theta}-w'(\sigma\norm{y-x'}^{\theta}) \sigma \norm{y-x'}^{\theta} \geq 0.
    \end{align}
    Let $\tilde{w}(x) = x w'(x)$. Based on the second assumption in Condition \ref{input_kernel_conditions} and mean value theorem, $\tilde{w}$ satisfies for all $a,b\geq 0$, $a>b$, there exists $c \in [b,a]$ that
    \begin{align}
        \frac{aw'(a) - bw'(b)}{a-b} = w'(c) + cw''(c) \geq 0.
    \end{align}
    This proves \eqref{generalized_condition_inequ} by letting $a = \sigma\norm{x-x'}^{\theta}$ and $b = \sigma\norm{y-x'}^{\theta}$, as desired.
\end{proof}
We can now formally state and prove the main result of this section.
\begin{proposition}
    Let $\mu \in \mathcal{M}(\mathbb{R}^d)$ with $C^0$ density $f$ such that $f$ is bounded. The equation $F_{\rho,\mu}(x,\sigma) = 0$ has a unique solution $\sigma^*_{\rho,\mu}(x)$ for all $\rho \in (0,1)$ and $x \in \mathbb{R}^d$. Moreover, $x \mapsto \sigma^*_{\rho,\mu}(x)$ is smooth.
\end{proposition}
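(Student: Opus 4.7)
The plan is to assemble existence and uniqueness from the preceding discussion and then spend the substantive effort on smoothness. Existence is essentially already in hand: the two limits \eqref{existence_uniqueness:limit_1} and \eqref{existence_uniqueness:limit_2}, together with continuity of $\sigma \mapsto F_{\rho,\mu}(x,\sigma)$ (a dominated convergence argument applied term-by-term to \eqref{big_F_formula}, using boundedness of $f$ and the integrability condition on $\exp(-w(\cdot))$), produce a zero by the intermediate value theorem. Uniqueness follows directly from Lemma \ref{existence_uniqueness:uniqueness_lemma}, since a strictly increasing function of $\sigma$ can cross $0$ at most once.

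For smoothness of $x \mapsto \sigma^*_{\rho,\mu}(x)$, the natural tool is the implicit function theorem applied to the equation $F_{\rho,\mu}(x,\sigma)=0$. Two hypotheses must be checked. The first is $\partial_\sigma F_{\rho,\mu}(x_0,\sigma^*(x_0))\neq 0$: by the computation in the proof of Lemma \ref{existence_uniqueness:uniqueness_lemma}, $\partial_\sigma F$ can be written as a double integral whose integrand is non-negative by \eqref{monotone_condition}, and strict positivity holds because that integrand vanishes only on the diagonal $\norm{x-x'}=\norm{y-x'}$. Since $\mu$ has a continuous density, it is not concentrated on that null set, so the inequality is strict. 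The second is joint smoothness of $F_{\rho,\mu}$ in $(x,\sigma)$; because $f$ appears only as the factor $f(x')dx'$, its mere continuity is never differentiated, and all derivatives fall on the kernel $\exp(-w(\sigma\norm{x-x'}^{\theta}))$, which is smooth in $(x,\sigma)$ as long as $w$ is smooth.

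The main technical obstacle is justifying differentiation under the integral sign to all orders. Each partial in $(x,\sigma)$ brings down a factor that is a polynomial in $\norm{x-x'}$ multiplied by derivatives of $w$ evaluated at $\sigma\norm{x-x'}^{\theta}$, times the exponential weight. To apply dominated convergence uniformly on a small neighbourhood of $(x_0,\sigma^*(x_0))$, a single integrable bound is needed for each derivative order. The integrability assumption in Condition \ref{input_kernel_conditions} handles the first-order pieces directly; for higher-order pieces one absorbs polynomial and $w$-growth against a fraction of the exponential decay, using that for any fixed $k$ the function $w(t)^k \exp(-w(t)/2)$ is bounded in $t$ since $w\to\infty$. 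Once these bounds are in place, $F_{\rho,\mu}$ is smooth on an open set containing the graph of $\sigma^*_{\rho,\mu}$, the IFT yields a smooth local solution at every $x_0$, and uniqueness forces these local solutions to coincide, giving global smoothness of $\sigma^*_{\rho,\mu}$.
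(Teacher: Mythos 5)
Your proposal follows the same route as the paper: existence from the limits \eqref{existence_uniqueness:limit_1}--\eqref{existence_uniqueness:limit_2} and the intermediate value theorem, uniqueness from the strict monotonicity of Lemma \ref{existence_uniqueness:uniqueness_lemma}, and smoothness from the implicit function theorem. You additionally spell out the IFT hypotheses (nondegeneracy of $\partial_\sigma F_{\rho,\mu}$ and differentiation under the integral sign) that the paper leaves implicit, which is a correct and welcome elaboration rather than a different argument.
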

\begin{proof}
    The existence and uniqueness can be proven by combining \eqref{existence_uniqueness:limit_1}, \eqref{existence_uniqueness:limit_2}, lemma \ref{existence_uniqueness:existence_lemma} and lemma \ref{existence_uniqueness:uniqueness_lemma}. The smoothness of $\sigma^*_{\rho,\mu}(x)$ follows from the implicit function theorem.
\end{proof}

\section{Proofs of the Main Theorems} \label{tsne:sec:proofs}
From now on, we will assume both condition \ref{input_kernel_conditions} and
condition \ref{output_kernel_conditions} throughout this section. We will also
assume that the distribution $\mu_X$ that we draw samples from has $C^0$
density with bounded density and gradient of density. The assumptions on the
sample distribution $\mu_X$ are inherited directly from section
\ref{tsne:sec:existence_uniqueness} to avoid lengthy theorem and lemma
statements.
\subsection{Convergence Lemmas for Input Kernels} \label{tsne:sec:input_kernel_convergence}
The main difficulties of proving the convergence lemmas related to the input
kernel is the existence and uniqueness of $\sigma^*_{\rho,\mu}(x)$, which we
have already established in section \ref{tsne:sec:existence_uniqueness}. The
rest of lemmas are similar, if not easier than the corresponding results in
\cite{auffinger-tsne}. We will list the important lemmas here without proofs.
The reader can easily adapt the proofs in \cite{auffinger-tsne} to our case.

First we have the two following important propositions about the uniform
convergence of $F_{\mu_n}(x,\sigma)$ and $\sigma_{\mu_n}^*(x)$.
\begin{proposition} \label{tsne:uniform_convergence_prop}
    Let $\mu_X$ be a probability measure with compact support, and $X_1,\dots,X_n$ be i.i.d. random variables drawn from $\mu$. $\mu_n = (1/n)\sum_i \delta_{X_i}$ is the empirical measure. Then there exists a constant $\epsilon>0$, that almost surely in $X_1,\dots,X_n$,
    \begin{align}
        \sup_{\substack{\sigma^{-1} \in [\delta^{\theta},n^{\theta/2}] \\ i = 1,\dots, n}} \left|F_{\mu_n}(X_i,\sigma)-F_{\mu}(X_i,\sigma)\right| = O(n^{-\epsilon}).
    \end{align}
\end{proposition}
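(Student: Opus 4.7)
The strategy is to reduce the uniform convergence of $F_{\mu_n}$ to $F_\mu$ to the uniform convergence of two simpler empirical integrals and then to combine a pointwise concentration inequality with a covering argument. Using the rewriting \eqref{big_F_formula}, define
$$A_\nu(x,\sigma) := \int e^{-w(\sigma\|x-x'\|^\theta)} d\nu(x'), \quad B_\nu(x,\sigma) := \int w(\sigma\|x-x'\|^\theta) e^{-w(\sigma\|x-x'\|^\theta)} d\nu(x'),$$
so that $F_\nu(x,\sigma) = -B_\nu(x,\sigma)/A_\nu(x,\sigma) - \log A_\nu(x,\sigma) + \log\rho$. The proof of Lemma \ref{existence_uniqueness:existence_lemma} yields a uniform lower bound $A_\mu(x,\sigma) \geq c_0 > 0$ for $(x,\sigma) \in \operatorname{supp}\mu_X \times [n^{-\theta/2}, \delta^{-\theta}]$, since $\sigma$ is bounded above by the constant $\delta^{-\theta}$ and $f$ is continuous on the compact support. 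Combined with the uniform upper bound $B_\mu \leq 1/e$ (which comes from the trivial inequality $ue^{-u} \leq 1/e$ applied pointwise), a short manipulation gives
$$|F_{\mu_n}(x,\sigma) - F_\mu(x,\sigma)| \leq C_1 \,|A_{\mu_n} - A_\mu| + C_2 \,|B_{\mu_n} - B_\mu|,$$
for constants $C_1, C_2$ depending only on $c_0$, so it suffices to control the two differences uniformly.

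For the pointwise concentration, at a fixed $(x,\sigma)$ the random variables $e^{-w(\sigma\|x-X_k\|^\theta)}$ take values in $[0,1]$, so Hoeffding's inequality gives
$$\mathbb{P}\bigl(|A_{\mu_n}(x,\sigma) - A_\mu(x,\sigma)| > n^{-1/2+\epsilon_0}\bigr) \leq 2\exp(-2n^{2\epsilon_0}),$$
and the same bound applies to $B_{\mu_n}$ since $u e^{-u}$ is bounded. To promote this to uniform control, I would cover the parameter region by a grid of mesh $n^{-K}$ for a sufficiently large $K$; on this bounded region both $A_\nu$ and $B_\nu$ are continuous in $(x,\sigma)$ with a modulus of continuity at most polynomial in $n$, coming from the uniform upper bound on $w'$ over the compact range $[0, \delta^{-\theta}(\operatorname{diam}\operatorname{supp}\mu_X)^\theta]$. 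A union bound of size $n^{O(K)}$ against the stretched-exponential Hoeffding tail then gives an almost-sure $O(n^{-\epsilon})$ estimate via Borel--Cantelli. Since the proposition only requires the supremum over $i = 1,\dots,n$ rather than over all $x \in \operatorname{supp}\mu_X$, taking these additional $n$ indices costs only an extra $n$ factor in the union bound, which is absorbed.

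The main obstacle will be the bookkeeping of exponents: the net mesh, the Lipschitz/Hölder constants of the integrands, the Hoeffding threshold, and the final exponent $\epsilon$ must be chosen compatibly so that the discretization error matches the fluctuation scale $n^{-1/2+\epsilon_0}$. A minor subtlety, specific to our generalized setting rather than the Gaussian case of \cite{auffinger-tsne}, is that for $\theta < 1$ the map $x \mapsto e^{-w(\sigma\|x-x'\|^\theta)}$ is only $\theta$-Hölder rather than Lipschitz, so the mesh must be refined from $n^{-K}$ to $n^{-K/\theta}$; this changes the constants but not the polynomial nature of the argument. Once these exponents are balanced, the claim $O(n^{-\epsilon})$ follows almost surely for any $\epsilon < \epsilon_0$.
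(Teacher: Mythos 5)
Your proposal is essentially correct, and it reconstructs the argument that the paper itself omits (the text only says the proof can be adapted from \cite{auffinger-tsne}): the decomposition $F_\nu=-B_\nu/A_\nu-\log A_\nu+\log\rho$, the uniform bounds $A\geq c_0$ (valid here because $\sigma\leq\delta^{-\theta}$ and the support is compact, exactly as in \eqref{tsne:important_bounds}) and $B\leq 1/e$, Hoeffding at fixed $(x,\sigma)$, and a polynomial net with Borel--Cantelli are precisely the standard route. Two small points to tidy up: you also need the lower bound on $A_{\mu_n}$ (not just $A_\mu$) for the quotient manipulation, which holds by the same pointwise bound on the integrand since all $X_k$ lie in the support almost surely; and rather than union-bounding over the random indices $i$ (where the summand $k=i$ is not independent of the evaluation point $X_i$), it is cleaner to prove the supremum over a deterministic net covering all of $\operatorname{supp}\mu_X$, which dominates the supremum over the $X_i$ and absorbs the $O(1/n)$ diagonal term.
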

\begin{proposition} \label{tsne:uniform_convergence_prop_2}
    Suppose $\mu_X$ has compact support and $C^0$ density. Then almost surely, as $n \to \infty$,
    \begin{align}
        \sup_{i=1,\dots,n} \left|\sigma_{\mu_n}^*(X_i)-\sigma_{\mu}^*(X_i)\right| \to 0.
    \end{align}
    In particular,
    \begin{align}
        \left|\sigma_{\mu_n}^*(X_i)-\sigma_{\mu}^*(X_i)\right| = O\left(\left|F_{\mu_n}(X_i,\sigma)-F_{\mu}(X_i,\sigma)\right|\right).
    \end{align}
\end{proposition}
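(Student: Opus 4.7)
The plan is to convert the uniform closeness of $F_{\mu_n}$ to $F_\mu$ established in Proposition \ref{tsne:uniform_convergence_prop} into closeness of their $\sigma$-zeros via a quantitative implicit-function argument, leveraging the strict monotonicity from Lemma \ref{existence_uniqueness:uniqueness_lemma}. The key identity is
\begin{align*}
F_\mu(X_i, \sigma^*_{\mu_n}(X_i)) - F_\mu(X_i, \sigma^*_{\mu}(X_i)) = F_\mu(X_i, \sigma^*_{\mu_n}(X_i)) - F_{\mu_n}(X_i, \sigma^*_{\mu_n}(X_i)),
\end{align*}
since $F_\mu(X_i, \sigma^*_\mu(X_i)) = F_{\mu_n}(X_i, \sigma^*_{\mu_n}(X_i)) = 0$. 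The right side is $O(n^{-\epsilon})$ uniformly in $i$ by Proposition \ref{tsne:uniform_convergence_prop}, so if a uniform lower bound $\partial_\sigma F_\mu(X_i, \sigma) \geq c > 0$ holds on the relevant range, the mean value theorem gives
\begin{align*}
|\sigma^*_{\mu_n}(X_i) - \sigma^*_{\mu}(X_i)| \leq \frac{1}{c}\bigl|F_\mu(X_i, \sigma^*_{\mu_n}(X_i)) - F_{\mu_n}(X_i, \sigma^*_{\mu_n}(X_i))\bigr|,
\end{align*}
establishing both conclusions simultaneously.

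To carry this out, I would first confine $\sigma^*_\mu(X_i)$ to a fixed compact interval $[a,b] \subset (0,\infty)$. The upper bound follows from Lemma \ref{existence_uniqueness:existence_lemma} together with compact support and continuity of $f$; the lower bound follows from $\lim_{\sigma \to 0} F_{\rho,\mu}(x,\sigma) = \log \rho < 0$ holding uniformly on the support. Monotonicity of $F_{\mu_n}$ together with its uniform closeness to $F_\mu$ then places every $\sigma^*_{\mu_n}(X_i)$ in a slightly enlarged interval with probability tending to $1$, and in particular inside the range $[n^{-\theta/2}, \delta^{-\theta}]$ where Proposition \ref{tsne:uniform_convergence_prop} applies, for a suitable choice of $\delta$.

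The remaining and main obstacle is the uniform positive lower bound on $\partial_\sigma F_\mu(x, \sigma)$ on $\mathrm{supp}(\mu_X) \times [a,b]$. The derivative was computed in the proof of Lemma \ref{existence_uniqueness:uniqueness_lemma} and shown to be nonnegative via Lemma \ref{miracle_lemma_ext}; strict positivity is guaranteed pointwise by the strict monotonicity of $w$ imposed in Condition \ref{input_kernel_conditions}, and $(x,\sigma)$-continuity of the expression together with compactness upgrades this to a strict positive infimum $c > 0$. The subtle point is that the bound in Lemma \ref{existence_uniqueness:existence_lemma} degrades when $f(x)$ approaches $0$; if $f$ vanishes on part of the boundary of its support, a few samples $X_i$ near the boundary could push $\sigma^*_\mu(X_i)$ outside any fixed compact interval and make $\partial_\sigma F_\mu$ degenerate there. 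Under the running conventions at the start of Section \ref{tsne:sec:proofs} (with $f$ and $\nabla f$ bounded) this concern is easily suppressed, but in full generality one would either restrict to the high-probability event that $\min_i f(X_i)$ is bounded below, or track explicitly how $c$ depends on $f(x)$ through the formulas of Lemma \ref{existence_uniqueness:existence_lemma}.
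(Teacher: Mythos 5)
The paper never writes out a proof of this proposition --- it is stated in Section \ref{tsne:sec:input_kernel_convergence} with an explicit deferral to \cite{auffinger-tsne} --- so there is nothing internal to compare against; judged on its own terms, your plan is the standard and correct one, and it is in substance the argument the cited reference uses: the identity $F_\mu(X_i,\sigma^*_{\mu_n}(X_i)) = F_\mu(X_i,\sigma^*_{\mu_n}(X_i)) - F_{\mu_n}(X_i,\sigma^*_{\mu_n}(X_i))$ combined with strict monotonicity of $F_\mu$ in $\sigma$ converts the uniform sup bound of Proposition \ref{tsne:uniform_convergence_prop} into the two stated conclusions. You have also correctly located the two places where the argument actually needs work rather than hand-waving: (i) Lemma \ref{existence_uniqueness:uniqueness_lemma} as proved only yields $\partial_\sigma F_{\rho,\mu}\geq 0$ (the Chebyshev-type inequality of Lemma \ref{miracle_lemma_ext} is not strict as stated), so the uniform lower bound $c>0$ requires the extra step you describe --- strictness of the correlation inequality because $w$ is strictly increasing and $\mu$ is not concentrated on a sphere about $x$, upgraded by joint continuity and compactness; and (ii) the confinement of $\sigma^*_\mu$ to a fixed compact interval genuinely fails if $f$ vanishes on the boundary of its support, since Lemma \ref{existence_uniqueness:existence_lemma} forces $\sigma^*_\mu(x)\leq (f(x)/\rho)^{1/C}\to 0$ there and $1/\sigma^*$ then escapes the window $[\delta^\theta,n^{\theta/2}]$ on which Proposition \ref{tsne:uniform_convergence_prop} applies; this is a gap in the paper's standing assumptions as much as in your sketch, and your proposed fixes (a lower bound on $\min_i f(X_i)$ on a high-probability event, or tracking the $f(x)$-dependence of $c$) are the right ones.

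One point you should add a sentence about: the existence and uniqueness of $\sigma^*_{\mu_n}(X_i)$ itself. The proposition in Section \ref{tsne:sec:existence_uniqueness} is proved only for measures with a continuous bounded density, which the empirical measure $\mu_n$ is not. Uniqueness still follows because Lemma \ref{miracle_lemma_ext} is measure-agnostic, so $F_{\mu_n}(X_i,\cdot)$ is monotone; existence follows because the discrete entropy ranges over $(0,\log(n-1))$ while the target is $\log(n\rho)$. Without this remark the quantity $\sigma^*_{\mu_n}(X_i)$ in your key identity is not yet defined. With that addition and the two repairs you already flagged, the proof goes through.
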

Next two lemmas can be borrowed directly from \cite{auffinger-tsne} as they irrelevant to the generalized kernels.
\begin{lemma} \label{mu_star_lemma}
    There exits $\mu^*$ such that
    \begin{align}
        I_{\rho}(\mu^*) = \inf_{\mu \in \tilde{\mathcal{P}}_X} I_{\rho}(\mu)
    \end{align}
\end{lemma}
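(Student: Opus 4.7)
The plan is to apply the direct method of the calculus of variations: take a minimizing sequence in $\tilde{\mathcal{P}}_X$, extract a weakly convergent subsequence by tightness, and verify that the weak limit both lies in $\tilde{\mathcal{P}}_X$ and attains the infimum.

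First I would choose $\{\mu_k\} \subset \tilde{\mathcal{P}}_X$ with $I_\rho(\mu_k) \to \inf_{\mu \in \tilde{\mathcal{P}}_X} I_\rho(\mu)$ and prove that $\{\mu_k\}$ is tight. Tightness in the $\mathbb{R}^d$-direction is automatic because every $\mu_k$ has $\mathbb{R}^d$-marginal $\mu_X$, which is compactly supported. For tightness in the $\mathbb{R}^s$-direction, I would exploit the decay of $\ko$ from Condition \ref{output_kernel_conditions}: if the $Y$-mass of $\mu_k$ were to escape to infinity along a subsequence, then $q(y,y')$ would be forced small on the bulk of the support, and the $-\iint p_{\sigma^*} \log q \, d\mu_k d\mu_k$ contribution to $I_\rho(\mu_k)$ would diverge, contradicting the minimizing property. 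Prokhorov's theorem then yields $\mu_{k_j} \rightharpoonup \mu^*$ for some $\mu^* \in \mathcal{P}_X$, where the marginal constraint is preserved under weak convergence.

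Next I would argue that $I_\rho(\mu^*) \leq \liminf_j I_\rho(\mu_{k_j})$. I would split $I_\rho(\mu)$ into the entropy-type piece $\iint p_{\sigma^*_{\rho,\mu}} \log p_{\sigma^*_{\rho,\mu}}\, d\mu\, d\mu$ and the cross piece $-\iint p_{\sigma^*_{\rho,\mu}} \log q\, d\mu\, d\mu$. The cross piece is continuous under weak convergence thanks to the continuity and boundedness of $\ko$ from Condition \ref{output_kernel_conditions}, together with the continuous dependence of $\sigma^*_{\rho,\mu}$ on $\mu$ coming from Proposition \ref{tsne:uniform_convergence_prop_2} and the strict monotonicity established in Lemma \ref{existence_uniqueness:uniqueness_lemma}. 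The entropy-type piece is lower semi-continuous via Fatou's lemma applied to the pointwise convergence $\sigma^*_{\rho,\mu_{k_j}}(x) \to \sigma^*_{\rho,\mu^*}(x)$, with a uniform integrable majorant provided by the integrability statement in Condition \ref{input_kernel_conditions}.

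The main obstacle will be ensuring that the almost-everywhere tangential condition defining $\tilde{\mathcal{P}}_X$ is preserved under the weak limit, since an a.e.\ condition against the reference measure $\mu$ is delicate when $\mu$ itself is varying. A clean way to bypass this difficulty is to observe that the constraint defining $\tilde{\mathcal{P}}_X$ is precisely the first-order stationarity condition for $I_\rho$ with respect to variations in the $Y$-fibers of $\mu$; consequently any minimizer of $I_\rho$ over the larger set $\mathcal{P}_X$ automatically satisfies this condition. Thus one may minimize over $\mathcal{P}_X$, where weak compactness and lower semi-continuity proceed as above, and admissibility $\mu^* \in \tilde{\mathcal{P}}_X$ follows from Euler--Lagrange at the minimizer. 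The bounded-derivative hypothesis on $k$ from Condition \ref{output_kernel_conditions}(3) is what legitimizes the formal differentiation underlying this Euler--Lagrange calculation.
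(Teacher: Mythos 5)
The paper does not actually prove this lemma; it is borrowed from \cite{auffinger-tsne} as being insensitive to the choice of kernels. Your direct-method outline is the natural route, but as written it has a genuine gap in the compactness step. Because $\ko(y,y')=k(\norm{y-y'})$ depends only on differences, the functional $I_\rho$ is invariant under rigid translations of the $Y$-coordinate: replacing $\mu$ by its pushforward under $(x,y)\mapsto(x,y+v)$ leaves $p_{\sigma^*}$, $q$, and hence $I_\rho$ unchanged. So a minimizing sequence can drift to infinity in the $Y$-direction with no energy penalty, and your claim that escaping mass forces $q$ to be small ``on the bulk of the support'' is false when all the mass escapes together. Tightness therefore cannot hold for an arbitrary minimizing sequence; you must first normalize (e.g.\ recenter each $\mu_k$ so that some fixed quantile or median of its $Y$-marginal sits at the origin), and then your splitting argument --- that a dichotomy of the $Y$-mass into receding clumps makes $\log q\to-\infty$ on a set of positive $\mu\otimes\mu$-measure while $p\geq c>0$ on the compact support of $\mu_X$ --- does rule out loss of mass for the recentered sequence.

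Two smaller points. First, the cross term $-\iint p\log q\,d\mu\,d\mu$ is not continuous under weak convergence as you assert, because $\log\ko$ is unbounded below; what you actually get (and all you need) is lower semicontinuity, since $-\log k(\norm{y-y'})$ is continuous and bounded below and the normalizing constant $\iint\ko\,d\mu\,d\mu$ does converge by boundedness and continuity of $k$. Second, your concern about the continuity of $\mu\mapsto\sigma^*_{\rho,\mu}$ is moot here: $\sigma^*_{\rho,\mu}$ and hence $p_{\sigma^*_{\rho,\mu}}$ depend on $\mu$ only through its $\mathbb{R}^d$-marginal, which equals $\mu_X$ for every $\mu\in\mathcal{P}_X$, so the entropy piece $\iint p\log p\,d\mu\,d\mu$ is the same constant for all admissible $\mu$ and requires no semicontinuity argument at all. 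Your Euler--Lagrange device for placing the minimizer in $\tilde{\mathcal{P}}_X$ is sound in spirit (it mirrors how the paper derives the constraint from Lemma \ref{derivative_lemma} and Lemma \ref{appendix_lemma}), but it too needs the first-variation computation under inner perturbations of the $Y$-fibers to be carried out, not merely asserted.
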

To further simplify up the notations, define
\begin{align} \label{tsne:convergence_lemmas:notation}
    \sigma_n(x) = & \sigma_{\rho,\mu_n}^*(x), \sigma(x) = \sigma_{\rho,\mu_X}^*(x), \nonumber                                          \\
    f(x,x') =     & f_{\sigma}(x,x') =: \exp(-w(\sigma\norm{x-x'}^{\theta})), f_n(x,x') =  f_{\sigma_n}(x,x'), \nonumber               \\
    p'_n(x,x') =  & \frac{f_n(x,x')}{\int f_n(x,x')d\mu_n(x') - 1/n}, \quad p'(x,x') = \frac{f(x,x')}{\int f(x,x')d\mu(x')}, \nonumber \\
    p_n(x,x') =   & \frac{1}{2}(p'_n(x,x')+p'_n(x',x)), \quad p(x,x') = \frac{1}{2}(p'(x,x')+p'(x',x)).
\end{align}
Note that $p_{j|i} = p'_n(X_i,X_j)/n$ and $p_{ij} = p_n(X_i,X_j)/n^2$, and for $\sigma^{-1} \in [\delta^{\theta},n^{\theta/2}]$ we have
\begin{align} \label{tsne:important_bounds}
    \int f(x,x')d\mu(x'), \quad \int f_n(x,x')d\mu_n(x') \geq M
\end{align}
for some constant $M>0$.

The next few lemmas cover the rest of convergence results for the input
kernels. The proofs can be easily adapted from \cite{auffinger-tsne}. We omit them here for
brevity.
\begin{lemma} \label{tsne:integrand_lemma_2}
    Let $\mu_X$ have compact support. There exists $\epsilon>0$ such that as $n \to \infty$,
    \begin{align}
        \sup_{i,j} \left|p'_n(X_i,X_j) - p'(X_i,X_j)\right| = & O(n^{-\epsilon}), \nonumber \\
        \sup_{i,j} \left|p_n(X_i,X_j) - p(X_i,X_j)\right| =   & O(n^{-\epsilon}) \nonumber.
    \end{align}
\end{lemma}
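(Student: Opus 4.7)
The plan is to decompose $|p'_n(X_i,X_j)-p'(X_i,X_j)|$ into a numerator error and a denominator error, and then use the uniform convergence of $\sigma^*$ from Proposition \ref{tsne:uniform_convergence_prop_2} together with the uniform lower bound in \eqref{tsne:important_bounds}. Writing $D_n(x):=\int f_n(x,x')\,d\mu_n(x')-1/n$ and $D(x):=\int f(x,x')\,d\mu(x')$, \eqref{tsne:important_bounds} guarantees that both are bounded below by some constant $M>0$ uniformly for $X_i$ in the compact support of $\mu_X$, and since $f,f_n\leq 1$ one has
\begin{align*}
\left|\frac{f_n}{D_n}-\frac{f}{D}\right| \leq \frac{|f_n-f|}{M}+\frac{|D_n-D|}{M^2}.
\end{align*}
It therefore suffices to show $\sup_{i,j}|f_n(X_i,X_j)-f(X_i,X_j)|=O(n^{-\epsilon})$ and $\sup_i|D_n(X_i)-D(X_i)|=O(n^{-\epsilon})$.

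For the numerator error I would use compactness: since the support of $\mu_X$ is compact, $\norm{X_i-X_j}\leq R$ uniformly, and Proposition \ref{tsne:uniform_convergence_prop_2} combined with continuity and positivity of $x\mapsto \sigma(x)$ shows that $\sigma_n(X_i),\sigma(X_i)\in[a,b]\subset(0,\infty)$ for $n$ large. Applying the mean value theorem to $s\mapsto \exp(-w(s\, r^\theta))$ on $s\in[a,b]$, $r\in[0,R]$ gives
\begin{align*}
|f_n(X_i,X_j)-f(X_i,X_j)|\leq C_R\,|\sigma_n(X_i)-\sigma(X_i)|,
\end{align*}
where $C_R$ depends only on $R$ and on $\sup\{w'(u)\exp(-w(u)):u\in[0,bR^\theta]\}$, which is finite by continuity of $w$ and $w'$. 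Invoking Proposition \ref{tsne:uniform_convergence_prop_2} then yields the desired $O(n^{-\epsilon})$ rate uniformly in $i,j$.

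For the denominator error I would split
\begin{align*}
D_n(X_i)-D(X_i) = \int (f_n-f)\,d\mu_n(x') + \left(\int f\,d\mu_n(x')-\int f\,d\mu(x')\right) - \frac{1}{n}.
\end{align*}
The first term is bounded by $\sup_{i,j}|f_n-f|=O(n^{-\epsilon})$ from the previous step; the third is $O(1/n)$. The middle is the empirical-process term: as $X_i$ varies over the compact support, $x'\mapsto f(X_i,x')$ ranges over a uniformly bounded, uniformly equicontinuous family, so the same discretization-plus-concentration argument underlying Proposition \ref{tsne:uniform_convergence_prop} applies and gives $O(n^{-\epsilon})$ uniformly in $i$. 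Combining the two bounds proves the first claim; the second then follows at once from the symmetrization $|p_n(X_i,X_j)-p(X_i,X_j)|\leq \tfrac{1}{2}(|p'_n(X_i,X_j)-p'(X_i,X_j)|+|p'_n(X_j,X_i)-p'(X_j,X_i)|)$.

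The main anticipated obstacle is the uniform empirical convergence of the middle term $\int f\,d\mu_n-\int f\,d\mu$, because the integrand is itself indexed by the random sample point $X_i$; naive pointwise concentration is not enough. This is exactly the technical step driving Proposition \ref{tsne:uniform_convergence_prop}, however, and its argument (a covering-number bound on the indexing family, Hoeffding's inequality on each grid point, and a union bound over the $n$ sample indices) adapts with no essential change here, since our hypotheses on $w$ make the family $\{f_\sigma(x,\cdot): x\in\text{supp}\,\mu_X,\ \sigma\in[a,b]\}$ uniformly bounded and Lipschitz in $(x,\sigma)$ on the relevant compact set.
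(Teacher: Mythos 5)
The paper does not actually supply a proof of this lemma; it defers to \cite{auffinger-tsne} with the remark that the argument "can be easily adapted." Your proposal is exactly that adaptation and it is correct: the numerator/denominator decomposition using the uniform lower bound $M$ from \eqref{tsne:important_bounds}, the mean value theorem in $\sigma$ combined with the uniform convergence of $\sigma_n^*$, the empirical-process control of $\int f\,d\mu_n-\int f\,d\mu$, and the symmetrization step are all the intended ingredients. The only point worth tightening is that the quantitative rate $O(n^{-\epsilon})$ for $\sup_i|\sigma_n(X_i)-\sigma(X_i)|$ comes from combining the "in particular" clause of Proposition \ref{tsne:uniform_convergence_prop_2} with the rate in Proposition \ref{tsne:uniform_convergence_prop}, not from Proposition \ref{tsne:uniform_convergence_prop_2} alone, whose main statement only gives $o(1)$.
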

\begin{lemma} \label{entropy_lemma_1}
    As $n \to \infty$, we have
    \begin{align}
        \sum_{i \neq j} p_{ij} \log p_{ij} = \iint p(x,x')\log p(x,x')d\mu d\mu - 2 \log n + o(1).
    \end{align}
\end{lemma}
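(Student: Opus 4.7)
The plan is to reduce the sum $\sum_{i \neq j} p_{ij} \log p_{ij}$ to an empirical double average that can be handled by a law-of-large-numbers argument. Since $p_{ij} = p_n(X_i, X_j)/n^2$, writing $\log p_{ij} = \log p_n(X_i, X_j) - 2\log n$ and using $\sum_{i \neq j} p_{ij} = 1$ gives
\begin{align*}
\sum_{i \neq j} p_{ij} \log p_{ij} = \frac{1}{n^2} \sum_{i \neq j} p_n(X_i, X_j) \log p_n(X_i, X_j) \; - \; 2 \log n,
\end{align*}
so the $-2\log n$ term is pulled out exactly and the problem reduces to showing the remaining double sum converges almost surely to $\iint p(x, x') \log p(x, x') \, d\mu \, d\mu$.

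Next I would swap $p_n$ for $p$ inside the sum. By the lower bound \eqref{tsne:important_bounds} and the trivial upper bound $f_n, f \leq 1$, both $p_n(X_i, X_j)$ and $p(X_i, X_j)$ lie in a fixed compact interval $[0, K]$ for all $n$ large enough. On $[0, K]$ the function $u \mapsto u \log u$ (with the convention $0 \log 0 := 0$) is continuous and hence uniformly continuous. Combined with Lemma \ref{tsne:integrand_lemma_2}, which gives $\sup_{i,j} |p_n(X_i, X_j) - p(X_i, X_j)| = O(n^{-\epsilon})$, this upgrades to
\begin{align*}
\sup_{i,j} \bigl|p_n(X_i, X_j) \log p_n(X_i, X_j) - p(X_i, X_j) \log p(X_i, X_j)\bigr| = o(1),
\end{align*}
and summing over the $O(n^2)$ index pairs against the $1/n^2$ prefactor keeps this $o(1)$.

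It then remains to show that $\frac{1}{n^2} \sum_{i \neq j} p(X_i, X_j) \log p(X_i, X_j) \to \iint p(x, x') \log p(x, x') \, d\mu \, d\mu$ almost surely. Once $\mu_X$ (and hence $\sigma^*_{\rho, \mu_X}$) is fixed, the integrand $h(x, x') := p(x, x') \log p(x, x')$ is a deterministic, bounded, symmetric function of two variables, so the strong law of large numbers for U-statistics applies directly; the discrepancy between $1/(n(n-1))$ and $1/n^2$ normalisations contributes only an $O(1/n)$ correction. Combining the three steps yields the lemma.

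The main obstacle is really step two: the function $u \log u$ is not Lipschitz near $0$, so we cannot naively transfer the $n^{-\epsilon}$ rate from Lemma \ref{tsne:integrand_lemma_2}. What saves us is the uniform upper bound on $p_n$ and $p$ coming from \eqref{tsne:important_bounds}, which lets us work on a compact interval where $u \log u$ is uniformly continuous. Once that point is handled cleanly, the remainder is a routine assembly of the earlier convergence lemma and the standard U-statistic LLN.
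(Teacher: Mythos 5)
Your proposal is correct. Note that the paper itself omits the proof of this lemma, deferring to \cite{auffinger-tsne}, so there is no in-paper argument to compare against; your three-step decomposition --- pulling out $-2\log n$ exactly via $\sum_{i\neq j}p_{ij}=1$, replacing $p_n$ by $p$ using Lemma \ref{tsne:integrand_lemma_2} together with uniform continuity of $u\mapsto u\log u$ on the compact range guaranteed by \eqref{tsne:important_bounds} (which correctly sidesteps the failure of Lipschitz continuity at $0$), and then the strong law for U-statistics for the bounded symmetric kernel $p(x,x')\log p(x,x')$ --- is the standard route and is complete.
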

\subsection{Lower Bound for \texorpdfstring{$\inf_{Y \in (\mathbb{R}^s)^n}L_{n,\rho}(X,Y)$}{L(X,Y)}} \label{tsne:sec:lower_bound}
We write $\ko(y,y') = g(y,y') = k(\norm{y-y'})$ from now on for simpler notations.
\begin{lemma} \label{lower_bound_lemma_1}
    There exists a set of points $y^*(X)$ such that
    \begin{align}
        \inf_{Y \in (\mathbb{R}^s)^n}L_{n,\rho}(X,Y) = L_{n,\rho}(X,y^*).
    \end{align}
\end{lemma}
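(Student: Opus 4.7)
The approach is the direct method of the calculus of variations: continuity plus compactness of minimizing sequences, leveraging the translation invariance of $L_{n,\rho}$. First observe that $L_{n,\rho}(X, Y) = L_{n,\rho}(X, Y + c \mathbf{1})$ for every $c \in \mathbb{R}^s$ since $\ko(y, y') = k(\|y - y'\|)$ depends only on pairwise differences. I would therefore fix $Y_1 = 0$ and minimize over $(Y_2, \ldots, Y_n) \in (\mathbb{R}^s)^{n-1}$. Continuity of $L_{n,\rho}(X, \cdot)$ is immediate from continuity of $\ko$ (ensured by the bounded-derivative assumption in Condition~\ref{output_kernel_conditions}) and strict positivity of each $q_{ij}$; the infimum is finite since $L_{n,\rho} \geq 0$ as a KL divergence and, e.g., $L_{n,\rho}(X, 0)$ is finite.

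The key step is to show that any minimizing sequence $(Y^{(k)})$ with $Y_1^{(k)} = 0$ is bounded. Suppose toward contradiction that $\max_i \|Y_i^{(k)}\| \to \infty$ along a subsequence. By iteratively refining subsequences I stabilize a cluster partition $\{1, \ldots, n\} = C_1 \sqcup \cdots \sqcup C_m$ with $m \geq 2$: within each $C_r$ the pairwise distances remain bounded, and between distinct clusters they diverge (with $1 \in C_1$ bounded and some index escaping). Using the rewriting
\begin{align*}
L_{n,\rho}(X, Y) = \sum_{i \neq j} p_{ij} \log p_{ij} - \sum_{i \neq j} p_{ij} \log \ko(Y_i, Y_j) + \log \sum_{k \neq l} \ko(Y_k, Y_l),
\end{align*}
each cross-cluster term $-p_{ij} \log \ko(Y_i^{(k)}, Y_j^{(k)})$ tends to $+\infty$ (since $\ko \to 0$ and $p_{ij} > 0$ by strict positivity of $\ki$), while within-cluster $\ko$'s stay bounded between positive constants so that the partition-function term $\log \sum_{k \neq l} \ko$ stays bounded, provided some cluster has at least two elements. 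Hence $L_{n,\rho}(X, Y^{(k)}) \to +\infty$, contradicting minimality. Bolzano--Weierstrass then yields a convergent subsequence $Y^{(k_r)} \to y^*$, and continuity gives $L_{n,\rho}(X, y^*) = \inf_Y L_{n,\rho}(X, Y)$.

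The main obstacle is the degenerate \emph{uniform-expansion} case where all clusters are singletons, so every pairwise distance diverges and simultaneously $\log \sum \ko \to -\infty$; the naive decomposition then produces an indeterminate $\infty - \infty$ form. I would resolve this by a refined log-sum-exp analysis that exploits the decay in Condition~\ref{output_kernel_conditions}, particularly the integrability in its second item: the limiting distribution $q(Y^{(k)}) \to q^*$ on the simplex is strictly different from $p$, so $L_{n,\rho}(X, Y^{(k)})$ either diverges or tends to a strictly positive limit $D(p \,\|\, q^*)$ which is already matched by $L_{n,\rho}(X, Y')$ for some bounded configuration $Y'$ (obtained, for instance, by a comparison against a suitable finite-$\lambda$ scaling of the shape). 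In either outcome the infimum is attained in the bounded region and the extraction of $y^*$ proceeds as above. Carrying this competing-divergence analysis through in full generality is the most technically delicate step of the proof.
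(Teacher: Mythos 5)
Your overall strategy is the same as the paper's: show that the loss blows up when the configuration degenerates, then conclude by continuity. In fact you execute the coercivity step more carefully than the paper does: you mod out the translation invariance (which the paper silently ignores, even though it makes ``the minimum is achieved on a bounded set'' literally false as stated), and your cluster decomposition handles the case where several coordinates escape simultaneously, whereas the paper only sends a single $y_k$ to infinity with the remaining points implicitly held fixed (its bound $-\log\bigl(\sum_{m\neq k,\,l\neq k}g(y_m,y_l)\bigr)<\infty$ is only valid in that restricted setting). The reduction of the partition-function term to the case where some cluster has at least two points is correct, and in that regime your conclusion $L_{n,\rho}\to\infty$ is sound.

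The genuine gap is exactly where you flag it: the uniform-expansion case in which all pairwise distances diverge. There your argument stops being a proof. The cleaner way to see the dichotomy is to avoid the $\infty-\infty$ decomposition altogether: since $q_{ij}\le 1$, each term satisfies $p_{ij}\log(p_{ij}/q_{ij})\ge p_{ij}\log p_{ij}$, so $L_{n,\rho}\to\infty$ if and only if $\min_{i\neq j}q_{ij}\to 0$. The problematic subsequences are precisely those along which $q^{(k)}$ converges to an interior point $q^*$ of the simplex while $Y^{(k)}$ escapes; Condition~\ref{output_kernel_conditions} permits polynomially decaying $k$ (including the original $t$-kernel), for which the ratios $k(\lambda d_{ij})/k(\lambda d_{kl})$ do converge to nondegenerate limits, so this case cannot be excluded. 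Your resolution asserts that the limiting value $D(p\,\|\,q^*)$ ``is already matched by $L_{n,\rho}(X,Y')$ for some bounded configuration $Y'$,'' but this is precisely the attainment claim being proved: $q^*$ need not lie in the image of the map $Y\mapsto q(Y)$ (it corresponds to the kernel replaced by its scaling limit, and the induced distance matrix need not embed in $\mathbb{R}^s$), and no argument is given that $\inf L$ over bounded configurations does not strictly exceed $D(p\,\|\,q^*)$. So as written the proposal is incomplete at its decisive step. It is only fair to add that the paper's own proof never confronts this case either, so you have in effect located a weakness in the published argument rather than introduced a new one; but an honest proof of the lemma must close this case, and neither you nor the paper does.
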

\begin{proof}
    Let $k \in \{1,\cdots, n\}$. We are going to show that
    \begin{align}
        \lim_{|y_k| \to \infty} L_{n,\rho}(X,y) = \infty.
    \end{align}
    By definition, $L$ is the relative entropy and
    \begin{align}
        L_{n,\rho}(X,y) = \sum_{i \neq j} p_{ij}\log \left(\frac{p_{ij}}{q_{ij}}\right) = \sum_{i \neq j} p_{ij}\log p_{ij} - \sum_{i \neq j} p_{ij}\log q_{ij}.
    \end{align}
    For $n$ given and fixed, it suffices to show that
    \begin{align}
        \sum_{i \neq j} p_{ij}\log (q_{ij}) = \sum_{i \neq j} p_{ij}\log \left(\frac{g(y_i,y_j)}{\sum_{m \neq l}g(y_m,y_l)}\right) \to -\infty.
    \end{align}
    Recall that $g(y,y')$ is bounded by condition \ref{output_kernel_conditions}. We have
    \begin{align}
             & -\sum_{i \neq j} p_{ij} \log \left(\sum_{m \neq l}g(y_m,y_l) \right) = -\log \left(\sum_{m \neq l}g(y_m,y_l) \right) \nonumber \\
        \leq & -\sum_{i \neq j} p_{ij} \log \left(\sum_{m\neq k, l \neq k, m \neq l}g(y_m,y_l) \right) < \infty.
    \end{align}
    On the other hand, we have $g(y_i,y_k) = k (\norm{y_i-y_k}) \to 0$ as $|y_k| \to \infty$ because $\norm{y_i-y_k} \to \infty$ and $k$ satisfies
    \begin{align}
        \iint_{(0,\infty)^2} k(\norm{x-x'}) dxdx' < \infty.
    \end{align}
    This means
    \begin{align}
        \lim_{|y_k| \to \infty}\sum_{i \neq j} p_{ij} \log (g(y_i,y_j)) \to -\infty.
    \end{align}
    Hence, the minimum of the $L_{n,\rho}(X,y)$ is achieved on a bounded set, the lemma follows by the continuity of $L_{n,\rho}(X,y)$.
\end{proof}
Before we move on, we would like to take a small deviation and compute the
partial derivative of the relative entropy. This is important because it will
impact the property of the set $\tilde{\mathcal{P}}_X$.
\begin{lemma} \label{derivative_lemma}
    Let $X \in (\mathbb{R}^d)^n$ and $Y = (y_1,\dots,y_n)\in (\mathbb{R}^s)^n$ be the input and output of the t-SNE algorithm, then we have
    \begin{align}
        \frac{\partial L_{n, \rho}(X,Y)}{\partial y_i} = -\sum_{j=1, j \neq i}^n (p_{ij}-q_{ij}) \frac{\frac{\partial}{\partial y_i}g(y_i,y_j)}{g(y_i,y_j)}.
    \end{align}
\end{lemma}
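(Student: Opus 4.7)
The plan is to exploit that only the cross-entropy part of $L_{n,\rho}$ depends on $Y$ and then differentiate directly. Split
\[
L_{n,\rho}(X,Y) = \sum_{i \neq j} p_{ij} \log p_{ij} \;-\; \sum_{i \neq j} p_{ij} \log q_{ij},
\]
the first term being $Y$-independent. Writing $Z := \sum_{m \neq l} g(y_m, y_l)$ we get $\log q_{jk} = \log g(y_j,y_k) - \log Z$, so
\[
\frac{\partial L_{n,\rho}}{\partial y_i}
= -\sum_{j\neq k} p_{jk} \frac{\partial_{y_i} g(y_j, y_k)}{g(y_j,y_k)} + \frac{\partial_{y_i} Z}{Z} \cdot \sum_{j\neq k} p_{jk}.
\]
Since $\sum_{j \neq k} p_{jk} = 1$, the second piece reduces to $\partial_{y_i}Z/Z$; differentiability is never an issue because the sums are finite and $g$ is smooth by Condition \ref{output_kernel_conditions}.

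Next I would carry out the index bookkeeping. In the first sum, $\partial_{y_i} g(y_j,y_k)$ vanishes unless $j=i$ or $k=i$, yielding two sums over $j\neq i$ of the form $p_{ij} \partial_{y_i} g(y_i,y_j)/g(y_i,y_j)$ and $p_{ji}\, \partial_{y_i}g(y_j,y_i)/g(y_j,y_i)$. These collapse into a single sum after invoking $p_{ij}=p_{ji}$ together with the symmetry of $g$ which, by the chain rule applied to $g(y,y')=k(\norm{y-y'})$, also gives $\partial_{y_i} g(y_j,y_i) = \partial_{y_i} g(y_i,y_j)$. The same considerations applied to $\partial_{y_i}Z$, together with $q_{ij} = g(y_i,y_j)/Z$, turn the $\log Z$ contribution into a sum over $j \neq i$ of $q_{ij}\partial_{y_i} g(y_i,y_j)/g(y_i,y_j)$. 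Subtracting the two, the stated identity falls out.

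The only delicate point, and the place to be careful, is counting the contributions of the two slots of $g$ in which $y_i$ sits; naively, one picks up an overall factor of $2$ (absorbed into the symmetrization of $p_{ij}$ and $q_{ij}$, since $p_{ij}=p_{ji}$ and $q_{ij}=q_{ji}$ already average the two orderings of the pair). I would be explicit about this symmetrization so that the final expression has the single sum $\sum_{j\neq i}(p_{ij}-q_{ij})\,\partial_{y_i}g(y_i,y_j)/g(y_i,y_j)$ as claimed, rather than a double-counted version. With this bookkeeping pinned down, the proof is little more than a careful chain rule.
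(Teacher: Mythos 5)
Your approach is the same as the paper's: split $L_{n,\rho}$ into the $Y$-independent entropy term and the cross-entropy term, use $\sum_{j\neq k}p_{jk}=1$ to reduce the normalizer's contribution to $\partial_{y_i}Z/Z$, and apply the chain rule. The computation is fine up to the one point you yourself flag as delicate, and there your resolution is wrong: the factor of $2$ from the two slots of $g$ is \emph{not} absorbed by the symmetrization. The sums $\sum_{j\neq k}p_{jk}\log g(y_j,y_k)$ and $Z=\sum_{m\neq l}g(y_m,y_l)$ run over \emph{ordered} pairs (this is forced by $\sum_{1\le i\neq j\le n}p_{ij}=1$ with $p_{ij}=(p_{j|i}+p_{i|j})/(2n)$), so the pair $\{i,j\}$ contributes both the $(i,j)$ and the $(j,i)$ term; since $p_{ij}=p_{ji}$ and $g(y_i,y_j)=g(y_j,y_i)$, these two contributions \emph{add} rather than average, and the honest derivative is
\begin{align*}
\frac{\partial L_{n,\rho}(X,Y)}{\partial y_i} = -2\sum_{j\neq i}(p_{ij}-q_{ij})\,\frac{\frac{\partial}{\partial y_i}g(y_i,y_j)}{g(y_i,y_j)}.
\end{align*}
This is consistent with the classical t-SNE gradient $4\sum_j(p_{ij}-q_{ij})(y_i-y_j)(1+\|y_i-y_j\|^2)^{-1}$, since for the $t$-kernel $\partial_{y_i}g/g=-2(y_i-y_j)(1+\|y_i-y_j\|^2)^{-1}$.

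To be fair, the paper's own proof silently makes the same omission (it writes a single sum over $j\neq i$ without accounting for both ordered occurrences), and the constant is immaterial everywhere the lemma is used, since only the critical-point equation $\partial L/\partial y_i=0$ and the form of $\tilde{\mathcal{P}}_X$ matter. But you should either carry the factor $2$ and remark that it is irrelevant, or redefine the sums over unordered pairs; asserting that the symmetrization of $p_{ij}$ and $q_{ij}$ cancels the $2$ is a non sequitur, and as written that step of your argument does not go through.
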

\begin{proof}
    By definition,
    \begin{align}
          & L_{n, \rho}(X,Y) = \sum_{i \neq j} p_{ij} \log \left(\frac{p_{ij}}{q_ij}\right) \nonumber \\
        = & \sum_{i \neq j} p_{ij} \log p_{ij} - \sum_{i \neq j} p_{ij} \log q_{ij}.
    \end{align}
    The first summation does not depend on $y_i$ and will vanish when taking the partial derivative. Moreover, we can further breakdown the second summation:
    \begin{align}
          & - \sum_{i \neq j} p_{ij} \log q_{ij} = - \sum_{i \neq j} p_{ij} \log g(y_i,y_j) + \sum_{i \neq j} p_{ij} \log \left(\sum_{k \neq l} g(y_k,y_l) \right)\nonumber \\
        = & - \sum_{i \neq j} p_{ij} \log g(y_i,y_j) + \log \left(\sum_{k \neq l} g(y_k,y_l)\right).
    \end{align}
    Taking derivative of the last expression and note that only terms that contain $y_i$ remain. This means
    \begin{align} \label{L_derivative}
        \frac{\partial L_{n, \rho}(X,Y)}{\partial y_i} = & \sum_{j \neq i, j = 1}^n \left(-p_{ij}\frac{\frac{\partial}{\partial y_i}g(y_i,y_j)}{g(y_i,y_j)} + \frac{1}{\sum_{k \neq l} g(y_k,y_l)} \frac{\partial}{\partial y_i} g(y_i,y_j)\right) \nonumber                            \\
        =                                                & \sum_{j \neq i, j = 1}^n \left(-p_{ij}\frac{\frac{\partial}{\partial y_i}g(y_i,y_j)}{g(y_i,y_j)} + \frac{g(y_i,y_j)}{\sum_{k \neq l} g(y_k,y_l)} \frac{\frac{\partial}{\partial y_i}g(y_i,y_j)}{g(y_i,y_j)}\right) \nonumber \\
        =                                                & -\sum_{j=1, j \neq i}^n (p_{ij}-q_{ij}) \frac{\frac{\partial}{\partial y_i}g(y_i,y_j)}{g(y_i,y_j)}.
    \end{align}
\end{proof}
We can easily do a sanity test by letting $g(y_i,y_j) = (1+\norm{y_i-y_j}^2)^{-1}$ and get formula (A.1) in \cite{auffinger-tsne}. The derivative in lemma \ref{derivative_lemma} explains why
$\tilde{\mathcal{{P}}}_X$ has that specific form.
\begin{lemma} \label{lower_bound_lemma_2}
    Assume $\mu_X$ has compact support. The following holds:
    \begin{align}
        \liminf_{n \to \infty}\sum_{i \neq j} p_{ij} \log \left(\frac{p_{ij}}{q_{ij}}\right) \geq \iint p(x,x')\log \frac{p(x,x')}{q(y,y')} d\mu d\mu.
    \end{align}
\end{lemma}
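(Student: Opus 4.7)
The plan is to decompose the relative entropy $L_{n,\rho}$ into the entropy of $p$ (handled by Lemma~\ref{entropy_lemma_1}) and a cross-entropy term, and then push the cross-entropy to the limit via weak convergence of the joint empirical measure together with a Fatou/Portmanteau argument. Writing $Z_n := \sum_{k\neq l} g(y_k, y_l)$,
$$L_{n,\rho}(X,Y) = \sum_{i\neq j} p_{ij}\log p_{ij} - \sum_{i\neq j} p_{ij}\log g(y_i,y_j) + \log Z_n,$$
and $\log Z_n = 2\log n + \log(Z_n/n^2)$. Lemma~\ref{entropy_lemma_1} identifies the first sum with $\iint p\log p\,d\mu_X d\mu_X - 2\log n + o(1)$, so the two $2\log n$'s cancel and it suffices to show
$$\liminf_n \Bigl[ -\sum_{i\neq j} p_{ij}\log g(y_i,y_j) + \log(Z_n/n^2) \Bigr] \geq -\iint p\log g\,d\mu^*d\mu^* + \log\iint g\,d\mu^*d\mu^*$$
for an appropriate limit measure $\mu^*$ matching the $\mu$ in the lemma statement.

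To produce $\mu^*$, form the joint empirical measure $\mu_n:=(1/n)\sum_i \delta_{(X_i,y_i)}$, taking $Y=y^*(X)$ the minimizer from Lemma~\ref{lower_bound_lemma_1} (otherwise the LHS is $+\infty$ and the lemma is trivial). Comparing to the constant configuration $Y\equiv 0$ gives $L_{n,\rho}(X,y^*)\leq L_{n,\rho}(X,0)=\sum p_{ij}\log p_{ij}+\log(n(n-1))=O(1)$ by Lemma~\ref{entropy_lemma_1}. The $\mathbb{R}^d$-marginal of $\mu_n$ is tight by compactness of $\mathrm{supp}(\mu_X)$; for the $\mathbb{R}^s$-marginal I use coercivity of $-\log g(y,y')$ as $\|y-y'\|\to\infty$ (a consequence of the integrability and boundedness in Condition~\ref{output_kernel_conditions}) together with the upper bound $L_n=O(1)$ to rule out mass escaping to infinity. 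Prokhorov's theorem then supplies a subsequence along which $\mu_{n_k}\to\mu^*$ weakly, with $\mu^*\in\mathcal{P}_X$.

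On this subsequence, the two surviving terms converge to the desired limits. For the normalizer, $Z_n/n^2=\iint g\,d(\mu_n\otimes\mu_n)+O(1/n)$, and since $g$ is bounded continuous, weak convergence yields $Z_n/n^2\to\iint g\,d\mu^*d\mu^*>0$, hence $\log(Z_n/n^2)\to\log\iint g\,d\mu^*d\mu^*$. For the cross-entropy, write $\sum p_{ij}\log g(y_i,y_j)=(1/n^2)\sum p_n(X_i,X_j)\log g(y_i,y_j)$ and use Lemma~\ref{tsne:integrand_lemma_2} to replace $p_n$ by the continuous $p$ at cost $\sup_{i,j}|p_n-p|\cdot(1/n^2)\sum|\log g(y_i,y_j)|$; the average $(1/n^2)\sum|\log g|$ is controlled by splitting $\log g$ into its positive part (bounded above by $\log\|g\|_\infty$) and its negative part (bounded via $L_n=O(1)$). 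What remains is $\iint p(x,x')(-\log g(y,y'))\,d(\mu_n\otimes\mu_n)$, whose integrand is continuous and bounded below since $g$ is bounded. The Portmanteau theorem in its Fatou form then yields $\liminf\geq \iint -p\log g\,d\mu^*d\mu^*$. Combining the two estimates and invoking $q(y,y')=g(y,y')/\iint g\,d\mu^*d\mu^*$ completes the proof.

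The main obstacle will be tightness of $\{\mu_n\}$, which requires carefully exploiting the interaction between the coercivity of $-\log g$ and the uniform bound $L_n(X,y^*)=O(1)$; closely related is justifying the $p_n\to p$ swap despite $|\log g|$ being unbounded. Both issues are handled by first truncating $-\log g$ at a large level $M$, using $L_n=O(1)$ to estimate the truncation error, and then letting $M\to\infty$ by monotone convergence.
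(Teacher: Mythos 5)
Your proposal is correct and follows essentially the same route as the paper: split off the entropy term via Lemma \ref{entropy_lemma_1}, cancel the $2\log n$'s, handle the normalizer $Z_n/n^2$ by weak convergence of the empirical measures, and lower-bound the cross-entropy term by a Fatou/Portmanteau argument after swapping $p_n$ for $p$ via Lemma \ref{tsne:integrand_lemma_2}. The only differences are that you fold in the tightness/Prokhorov construction of the limit measure (which the paper defers to Proposition \ref{lower_bound_prop}) and that your use of the Portmanteau theorem for the lower-semicontinuous, bounded-below integrand $-p\log g$, together with the truncation of $|\log g|$, is actually a more careful justification than the paper's bare invocation of Fatou's lemma against the varying measures $\mu_n\otimes\mu_n$.
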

\begin{proof}
    Just as in lemma \ref{derivative_lemma}, we first break down the summation:
    \begin{align}
        \sum_{i \neq j} p_{ij} \log \left(\frac{p_{ij}}{q_{ij}}\right) = \sum_{i \neq j} p_{ij} \log p_{ij} - \sum_{i \neq j} p_{ij} \log q_{ij}.
    \end{align}
    The convergence of the first summation has been established in lemma \ref{entropy_lemma_1}. We just need to deal with the second summation. Note that we now have
    \begin{align}
        \mu_n = \frac{1}{n} \sum \delta_{X_i} \times \delta_{y_i}
    \end{align}
    if $y_i$ is involved. By the definition of $q_{ij}$, we have
    \begin{align} \label{entropy_esti_2}
          & -\sum_{i \neq j}p_{ij} \log(q_{ij}) \nonumber                                                                                             \\
        = & \frac{1}{n^2}\sum_{i \neq j} p_n(X_i,X_j) \log g(y_i,y_j) + \log \left(\frac{1}{n^2}\sum_{i \neq j} g(y_i,y_j)\right) +2 \log n \nonumber \\
        = & \iint p_n(x,x')g(y,y')d\mu_n d\mu_n + \log \left(\iint g(y,y')d\mu_n d\mu_n - \frac{1}{n}\right)+2\log n.
    \end{align}
    By Fatou's lemma and lemma \ref{tsne:integrand_lemma_2}, we have
    \begin{align} \label{fatou_esti_1}
        \liminf_{n \to \infty} p_n(x,x')\log g(y,y') d\mu_n d\mu_n \geq  \iint p(x,x')\log g(y,y') d\mu d\mu.
    \end{align}
    Similarly, using Fatou lemma again and we have
    \begin{align} \label{fatou_esti_2}
        \liminf_{n \to \infty} \iint g(y,y') d\mu_n d\mu_n - \frac{1}{n} \geq \iint g(y,y')d\mu d\mu.
    \end{align}
    The $1/n$ is the duplicated diagonal term in the empirical measure and will obviously vanish as we take $n \to \infty$. Moreover, because of condition \ref{output_kernel_conditions}, the integrals on the RHS of the last two inequalities converge. In addition, since $\log$ is a continuous and increasing function, (\ref{fatou_esti_2}) implies
    \begin{align} \label{fatou_esti_3}
        \liminf_{n \to \infty} \log \left(\iint g(y,y') d\mu_n d\mu_n - \frac{1}{n}\right) \geq \log \left(\iint g(y,y')d\mu d\mu \right).
    \end{align}
    The proof is finished by combining (\ref{entropy_esti_2}), (\ref{fatou_esti_1}), (\ref{fatou_esti_3}) and lemma \ref{entropy_lemma_1}. The $2 \log n$ cancelled out because of lemma \ref{entropy_lemma_1}.
\end{proof}
\begin{proposition} \label{lower_bound_prop}
    There exists a sub-sequence $n(k)$ such that
    \begin{align}
        \liminf_{k \to \infty} \inf_{Y \in (\mathbb{R}^s)^{n(k)}} L_{n(k),\rho}(X,Y) \geq I_{\rho}(\mu).
    \end{align}
\end{proposition}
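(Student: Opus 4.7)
The plan is a standard compactness plus lower-semicontinuity scheme: extract a weakly convergent subsequence of empirical measures built from the per-$n$ minimizers, then pass the inequality of Lemma \ref{lower_bound_lemma_2} to the limit.

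First, I would invoke Lemma \ref{lower_bound_lemma_1} to obtain, for each $n$, a minimizer $y^*(X) = (y_1^*, \dots, y_n^*) \in (\mathbb{R}^s)^n$ realizing $\inf_Y L_{n,\rho}(X, Y) = L_{n,\rho}(X, y^*)$, and form the joint empirical measures
$$\mu_n := \frac{1}{n} \sum_{i=1}^n \delta_{(X_i,\, y_i^*)} \in \mathcal{M}(\mathbb{R}^d \times \mathbb{R}^s).$$
The $\mathbb{R}^d$-marginal of $\mu_n$ is the input empirical measure $\frac{1}{n}\sum \delta_{X_i}$, which is tight (a.s.) because $\mu_X$ has compact support and $\mu_n^X \to \mu_X$ weakly.

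The core step is tightness of the $\mathbb{R}^s$-marginals $\{\mu_n^Y\}$, which is where I expect the main difficulty. I would first obtain a uniform-in-$n$ upper bound $\inf_Y L_{n,\rho}(X,Y) \le C$ by plugging in a well-behaved test configuration (e.g.\ $y_i$'s chosen so that their pairwise distances are bounded both above and below, using say a scaled lattice), which makes $q_{ij}$ comparable to $1/n^2$ and keeps $\sum p_{ij}\log(p_{ij}/q_{ij})$ of order $1$. Combined with the decomposition
$$L_{n,\rho}(X, y^*) = \sum_{i \neq j} p_{ij} \log p_{ij} - \sum_{i \neq j} p_{ij} \log g(y_i^*, y_j^*) + \log \sum_{k \neq l} g(y_k^*, y_l^*),$$
Condition \ref{output_kernel_conditions} (boundedness and integrability of $g$) then forces tightness: if a positive fraction of the $y_i^*$ escaped to infinity, the pairwise distances across this fraction would blow up, making $-\sum p_{ij}\log g(y_i^*,y_j^*)$ diverge while the last term remains $O(\log n)$ and hence contradicting the upper bound. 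Quantifying this requires the lower control $p_{ij} = \Omega(1/n^2)$ on a macroscopic fraction of pairs, which I would deduce from Lemma \ref{tsne:integrand_lemma_2} together with Condition \ref{input_kernel_conditions} on the compact support of $\mu_X$.

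Once tightness is established, Prokhorov's theorem produces a subsequence $n(k)$ and a measure $\mu \in \mathcal{P}_X$ with $\mu_{n(k)} \to \mu$ weakly. Applying Lemma \ref{lower_bound_lemma_2} along this subsequence, and using the uniform convergence $\sigma_{\mu_n}^* \to \sigma_{\mu}^*$ from Proposition \ref{tsne:uniform_convergence_prop_2} together with Lemma \ref{tsne:integrand_lemma_2} to identify the limiting integrand with $p_{\sigma^*_{\rho,\mu}}$, yields
$$\liminf_{k \to \infty} \inf_{Y \in (\mathbb{R}^s)^{n(k)}} L_{n(k),\rho}(X,Y) \;\geq\; \iint p_{\sigma^*_{\rho,\mu}}(x,x') \log \frac{p_{\sigma^*_{\rho,\mu}}(x,x')}{q(y,y')}\, d\mu\, d\mu \;=\; I_\rho(\mu),$$
which is the required inequality. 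The tightness argument for $\mu_n^Y$ is the only non-routine step; everything else is standard Prokhorov compactness together with the Fatou-type lower semicontinuity already packaged in Lemma \ref{lower_bound_lemma_2}.
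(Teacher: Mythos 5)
Your proposal follows essentially the same route as the paper: take the minimizer $y^*$ from Lemma \ref{lower_bound_lemma_1}, form the empirical measures $\mu_n = \frac{1}{n}\sum \delta_{(X_i,y_i^*)}$, extract a weak limit $\mu$, and conclude via Lemma \ref{lower_bound_lemma_2}. Your tightness discussion is in fact more explicit than the paper's, which simply asserts that ``we can easily show that $\mu_n$ converges weakly.'' The one substantive thing the paper does that you omit is to invoke Lemma \ref{appendix_lemma} to conclude that the limit measure $\mu$ lies in $\tilde{\mathcal{P}}_X$ (i.e.\ satisfies the stationarity condition inherited from $\nabla_{y_i} L_{n,\rho}=0$ at the minimizer). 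This is not needed for the literal inequality $\liminf_k \inf_Y L_{n(k),\rho} \geq I_\rho(\mu)$, but it is essential for how the proposition is used in the proof of Theorem \ref{tsne:main_theorem_1}, where the chain of inequalities closes only because $I_\rho(\mu) \geq I_\rho(\mu^*) = \inf_{\tilde{\mathcal{P}}_X} I_\rho$; without the membership $\mu \in \tilde{\mathcal{P}}_X$ that last step fails. You should add this step (it follows from Lemma \ref{appendix_lemma}, which passes the first-order optimality condition of $y^*$ to the limit). One further caution: your escape-to-infinity argument for tightness handles the two-cluster scenario cleanly, but if the non-escaping points also spread out (so that $\sum_{k\neq l} g(y_k^*,y_l^*) = o(n^2)$), the term $\log\sum_{k\neq l} g$ can itself drift to $-\infty$ and partially offset the divergence of $-\sum p_{ij}\log g(y_i^*,y_j^*)$; a complete argument must rule this out as well.
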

\begin{proof}
    Let $y^*$ be given in the lemma and define
    \begin{align}
        \mu_n = \frac{1}{n} \sum \delta_{X_i} \times \delta_{y_i}.
    \end{align}
    We can easily show that $\mu_n$ converges weakly to a measure $\mu$. By lemma \ref{appendix_lemma} in section \ref{tsne:sec:extra_proofs}, we know that the limiting measure $\mu \in \tilde{\mathcal{P}}_X$. The proof is complete by using lemma \ref{lower_bound_lemma_1} and lemma \ref{lower_bound_lemma_2}.
\end{proof}
\subsection{Upper Bound for \texorpdfstring{$\inf_{Y \in (\mathbb{R}^s)^n}L_{n,\rho}(X,Y)$}{L(X,Y)}} \label{tsne:sec:upper_bound}
For convenience, define
\begin{align}
    q_n(y,y') = \frac{g(y,y')}{\iint g(y,y')d \mu_n(y) d \mu_n (y') - \frac{1}{n}k(0)}.
\end{align}
Here $k(0) = g(y,y)$ is the value of the output kernel when both inputs are equal. Since multiplying a constant to $g$ does not change anything, we can just treat $k(0)=1$ without loss of generality as everything is normalized.
By lemma \ref{mu_star_lemma} there exists $\mu^*$ such that $I(\mu^*) = \inf_{\mu}I_{\rho}(\mu)$. Let $(X_1,Y_1), \dots, (X_n,Y_n)$ be i.i.d. random variables drawn from $\\mu^*$ and $\mu_n = \frac{1}{n} \sum_{i=1}^n \delta_{(X_i,Y_i)}$ their empirical measure. $\mu_n$ converges to $\mu^*$ weakly.
\begin{lemma} \label{int_approx_lemma_2}
    As $n \to \infty$, we have
    \begin{align}
        \sum_{i \neq j} \log \left(\frac{p_{ij}}{q_{ij}}\right) = \iint p_n(x,x') \log \left(\frac{p_n(x,x')}{q_n(y,y')}\right) d\mu_n d\mu_n + o(1).
    \end{align}
\end{lemma}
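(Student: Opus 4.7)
The plan is to observe that the LHS is exactly the off-diagonal portion of the RHS integral, so the entire lemma reduces to controlling the diagonal terms. The key identity I would exploit is that the normalizations of $p_n$ and $q_n$ are calibrated precisely so that $p_n(X_i,X_j) = n^2 p_{ij}$ and $q_n(Y_i,Y_j) = n^2 q_{ij}$ whenever $i \neq j$; consequently $p_n(X_i,X_j)/q_n(Y_i,Y_j) = p_{ij}/q_{ij}$ and the integrand values agree on the off-diagonal atoms of $\mu_n\otimes\mu_n$.

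The scaling identity for $p_n$ is immediate from \eqref{tsne:convergence_lemmas:notation}. For $q_n$ one checks
\begin{align*}
\iint g(y,y')d\mu_n(y)d\mu_n(y') - \tfrac{k(0)}{n} = \tfrac{1}{n^2}\sum_{k,l}g(Y_k,Y_l) - \tfrac{k(0)}{n} = \tfrac{1}{n^2}\sum_{k\neq l}g(Y_k,Y_l),
\end{align*}
since the diagonal contribution $\tfrac{1}{n^2}\sum_i g(Y_i,Y_i) = k(0)/n$ cancels the correction exactly. Hence $q_n(Y_i,Y_j) = g(Y_i,Y_j)/\bigl(\tfrac{1}{n^2}\sum_{k\neq l}g(Y_k,Y_l)\bigr) = n^2 q_{ij}$ for $i\neq j$, which is precisely why the $-k(0)/n$ correction was inserted in the definition of $q_n$.

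With these identities in hand I would unfold the RHS integral as a double sum and split off the diagonal:
\begin{align*}
\iint p_n \log\tfrac{p_n}{q_n}\,d\mu_n d\mu_n = \tfrac{1}{n^2}\sum_{i\neq j}p_n(X_i,X_j)\log\tfrac{p_n(X_i,X_j)}{q_n(Y_i,Y_j)} + \tfrac{1}{n^2}\sum_i p_n(X_i,X_i)\log\tfrac{p_n(X_i,X_i)}{q_n(Y_i,Y_i)}.
\end{align*}
By the two scaling identities, the off-diagonal sum equals $\sum_{i\neq j}p_{ij}\log(p_{ij}/q_{ij})$, which is exactly the LHS, so the whole lemma reduces to showing the diagonal contribution is $o(1)$.

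For the diagonal, since we may assume $w(0)=0$, we have $f_n(X_i,X_i) = e^{-w(0)}=1$, so $p_n(X_i,X_i) = \bigl(\int f_n(X_i,\cdot)d\mu_n - 1/n\bigr)^{-1}$, and \eqref{tsne:important_bounds} together with $f_n\leq 1$ forces $p_n(X_i,X_i)$ to lie in a fixed interval bounded away from $0$ and $\infty$. Similarly, $q_n(Y_i,Y_i) = k(0)/\bigl(\tfrac{1}{n^2}\sum_{k\neq l}g(Y_k,Y_l)\bigr)$, whose denominator converges a.s.\ to $\iint g\,d\mu^*d\mu^* > 0$ by the law of large numbers and boundedness of $g$, so $q_n(Y_i,Y_i)$ is also uniformly bounded for large $n$. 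Each diagonal summand is therefore $O(1)$, and summing $n$ such terms with prefactor $1/n^2$ yields $O(1/n) = o(1)$. The only delicate point is the strict positivity $\iint g\,d\mu^*d\mu^* > 0$, which I view as the main (mild) obstacle; it follows from $\mu^*$ being a genuine probability measure rather than a single atom, combined with $g$ being continuous and strictly positive on a neighborhood of the diagonal.
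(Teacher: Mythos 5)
Your proposal is correct and follows essentially the same route as the paper: the error term the paper computes explicitly ($\text{Err}_n$) is precisely the negative of your diagonal contribution, and both arguments conclude by bounding $p_n(X_i,X_i)$ via \eqref{tsne:important_bounds} and the log factor via the boundedness and positivity of $\frac{1}{n^2}\sum_{k\neq l}g(Y_k,Y_l)$, giving $O(1/n)$ after summing. Your explicit verification of the scaling identities $p_n(X_i,X_j)=n^2p_{ij}$ and $q_n(Y_i,Y_j)=n^2q_{ij}$, and the remark that strict positivity of $\iint g\,d\mu^*d\mu^*$ is immediate since $k>0$, are just slightly more detailed versions of what the paper leaves implicit.
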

\begin{proof}
    We begin with the familiar computation
    \begin{align}
        \sum_{i \neq j} p_{ij} \log \left(\frac{p_{ij}}{q_{ij}}\right) = & \frac{1}{n^2}\sum_{i \neq j}p_n(X_i,X_j) \log \left(\frac{p_n(X_i,X_j)}{q_n(X_i,X_j)}\right) \nonumber          \\
        =                                                                & \iint p_n(x,x') \log \left(\frac{p_n(X_i,X_j)}{q_n(X_i,X_j)}\right) d \mu_n d \mu_n + \text{\normalfont Err}_n,
    \end{align}
    where
    \begin{align}
          & \text{\normalfont Err}_n \nonumber                                                                                                                                                                                                   \\
        = & -\frac{1}{n^2}\sum_{i=1}^n\frac{1}{\frac{1}{n}\sum_{k \neq i}f_n(X_i,X_k)}\left(\log \left(\frac{1}{\frac{1}{n}\sum_{k \neq i}f_n(X_i,X_k)}\right)-\log \left(\frac{1}{\frac{1}{n}\sum_{k \neq l}g(Y_k,Y_l)}\right)\right) \nonumber \\
        = & -\frac{1}{n^2}\sum_{i=1}^n\frac{1}{\frac{1}{n}\sum_{k \neq i}f_n(X_i,X_k)}\log \left(\frac{1}{\frac{1}{n}\sum_{k \neq i}f_n(X_i,X_k)}\right) \nonumber                                                                               \\
          & \qquad \qquad \qquad + \frac{1}{n^2}\sum_{i=1}^n\frac{1}{\frac{1}{n}\sum_{k \neq i}f_n(X_i,X_k)}\log \left(\frac{1}{\frac{1}{n}\sum_{k \neq l}g(Y_k,Y_l)}\right).
    \end{align}
    By bounds in \eqref{tsne:important_bounds}, it is easy to see that the first summation in the error term is $o(1)$ and we can write
    \begin{align}
        \text{\normalfont Err}_n = o(1) + \frac{1}{n^2}\sum_{i=1}^n\frac{1}{\frac{1}{n}\sum_{k \neq i}f_n(X_i,X_k)}\log \left(\frac{1}{\frac{1}{n}\sum_{k \neq l}g(Y_k,Y_l)}\right).
    \end{align}
    For the second term, it is already clear that
    \begin{align}
        \frac{1}{n^2}\sum_{i=1}^n\frac{1}{\frac{1}{n}\sum_{k \neq i}f_n(X_i,X_k)} = o(1).
    \end{align}
    On the other hand, we have by Fatou's lemma:
    \begin{align}
        \liminf_{n \to \infty}\frac{1}{n^2} \sum_{k \neq l}g(Y_k,Y_l) = \liminf_{n \to \infty} \iint g(y,y') d\mu_n d\mu_n - \frac{1}{n} \geq \iint g(y,y') d\mu^* d\mu^* < \infty,
    \end{align}
    by condition \ref{output_kernel_conditions}. We can see that the second term in $\text{\normalfont Err}_n$ is also $o(1)$, which immediately implies $\text{\normalfont Err}_n = o(1)$ and completes the proof.
\end{proof}
The next lemma is needed to close up the lower bound part of the main theorem.
\begin{lemma} \label{output_kernel_int_lemma}
    As $n \to \infty$, we have
    \begin{align}
        \iint g(y,y')d\mu_n(y) d\mu_n(y) - \frac{1}{n} = \iint g(y,y')d\mu^* d\mu^* + o(1).
    \end{align}
\end{lemma}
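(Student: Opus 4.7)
The plan is to exploit the strong law of large numbers applied to the i.i.d.\ sample $(X_1,Y_1),\dots,(X_n,Y_n) \sim \mu^*$, combined with the fact that $g(y,y') = k(\norm{y-y'})$ is bounded (Condition \ref{output_kernel_conditions}, part 1) and continuous (Condition \ref{output_kernel_conditions}, part 3 gives $k$ Lipschitz, hence $g$ continuous on $\mathbb{R}^s \times \mathbb{R}^s$).

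First I would write out the double integral as a double sum and pull off the diagonal:
$$\iint g(y,y') \, d\mu_n(y) \, d\mu_n(y') = \frac{1}{n^2} \sum_{i,j=1}^n g(Y_i, Y_j) = \frac{k(0)}{n} + \frac{1}{n^2} \sum_{i \neq j} g(Y_i, Y_j).$$
Since the author normalizes $k(0) = 1$ (as noted just before the statement of the lemma), the diagonal contribution is exactly $\frac{1}{n}$, which cancels the $-\frac{1}{n}$ on the left-hand side. It therefore suffices to show
$$\frac{1}{n^2} \sum_{i \neq j} g(Y_i, Y_j) \longrightarrow \iint g(y,y') \, d\mu^*(y) \, d\mu^*(y').$$

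For this, there are two essentially equivalent routes. Route one: the off-diagonal average is (up to the factor $(n-1)/n \to 1$) a second-order V-statistic with bounded symmetric kernel $g$, so by the strong law of large numbers for U/V-statistics it converges almost surely to $\e[g(Y_1, Y_2)]$, which equals $\iint g \, d\mu^* \, d\mu^*$ by independence of $Y_1, Y_2$ under $\mu^*$. Route two: since $(X_i, Y_i)$ are i.i.d.\ draws from $\mu^*$, the empirical measure $\mu_n$ converges weakly to $\mu^*$ almost surely, hence $\mu_n \otimes \mu_n$ converges weakly to $\mu^* \otimes \mu^*$, and the Portmanteau theorem applied to the bounded continuous function $g$ delivers the conclusion directly.

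The main \emph{obstacle} is essentially nonexistent in light of the earlier conditions: one only needs to verify that $g$ is bounded and continuous (immediate from Condition \ref{output_kernel_conditions}, parts 1 and 3) and that $\iint g \, d\mu^* \, d\mu^*$ is finite (which follows from part 2 of the same condition together with $\mu^*$ being a probability measure and $g$ bounded). With these facts, the diagonal correction of $\frac{1}{n}$ cancels exactly on both sides and the claimed $o(1)$ identity follows. Note the parallel with \eqref{fatou_esti_2} in the lower-bound argument: there one only obtained a $\liminf$ inequality for generic weakly convergent $\mu_n$, whereas here the i.i.d.\ structure upgrades it to an equality (i.e., actual convergence), which is precisely why this lemma is needed to match the lower bound.
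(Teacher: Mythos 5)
Your proposal is correct, and your Route~2 is in spirit the same as the paper's argument, while Route~1 is genuinely different. The paper does not invoke the product-measure weak convergence fact as a black box: it reproves it by hand. It uses Prokhorov's theorem to extract a compact set $K$ with $\mu^*(K^c),\mu_n(K^c)\le\epsilon$, truncates the integral to $K\times K$ at a cost of $C\epsilon$ (using boundedness of $g$), and then shows that $G_n(y)=\int g(y,y')\mathbbm{1}_K\,d\mu_n(y')$ is uniformly equicontinuous on $K$ (this is where Condition~\ref{output_kernel_conditions}, part~3, the bounded derivative of $k$, enters) and converges uniformly to $G(y)=\int g(y,y')\mathbbm{1}_K\,d\mu^*(y')$, after which a single application of weak convergence to the bounded continuous function $G\mathbbm{1}_K$ finishes. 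So your Route~2 compresses the paper's two claims into the standard statement ``$\mu_n\to\mu^*$ weakly implies $\mu_n\otimes\mu_n\to\mu^*\otimes\mu^*$ weakly,'' which is legitimate and shorter. Your Route~1 (SLLN for V-statistics with bounded kernel) is a different mechanism entirely and does give almost-sure convergence directly; however, note that it leans on the i.i.d.\ structure of $(X_i,Y_i)\sim\mu^*$, whereas the paper's proof (and your Route~2) uses only the weak convergence $\mu_n\to\mu^*$. This distinction matters because the lemma is also cited in the proof of Lemma~\ref{appendix_lemma}, where $\mu_n$ is the empirical measure of the actual t-SNE output and is only known to converge weakly along a subsequence, not to be an i.i.d.\ sample; Route~1 would not cover that use. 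Your handling of the diagonal term ($k(0)/n=1/n$ under the paper's normalization, cancelling the $-1/n$) is correct and is left implicit in the paper.
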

\begin{proof}
    Since $\mu_n \to \mu$ weakly, it is tight by Prokhorov's theorem. For arbitrary $\epsilon >0$, there exists a compact set $K$ such that for all $n$, $\mu^*(K^c),\mu_n(K^c) \leq \epsilon$. Since $g(y,y')$ is bounded, there exists a constant $C$ such that
    \begin{align}
          & \left|\iint g(y,y')d\mu_n(y) d\mu_n(y) - \iint g(y,y')d\mu^* d\mu^*\right| \nonumber            \\
        = & \left|\iint g(y,y') \mathbbm{1}_{K \times K}(d\mu_n d\mu_n - d\mu^* d\mu^*)\right| + C\epsilon.
    \end{align}
    Define
    \begin{align}
        G_n(y) = \int g(y,y') \mathbbm{1}_K d\mu_n(y'), \qquad G(y) = \int g(y,y') \mathbbm{1}_K d\mu^*(y').
    \end{align}
    We now require the kernel $g$ to have bounded derivative, and we have the following claims:
    \begin{claim} \label{claim_1}
        $G_n(y)$ is uniformly equicontinuous on $K$.
    \end{claim}
    \begin{proof}[Proof of Claim]
        By mean value theorem and condition \ref{output_kernel_conditions}, we get
        \begin{align}
                 & |G_n(y)-G_n(\tilde{y})| = \left|\int k(\norm{y-y'}) - k(\norm{\tilde{y}-y'}) \mathbbm{1}_K d\mu_n(y')\right| \nonumber                              \\
            \leq & \int \norm{y-\tilde{y}}\left|\frac{k(\norm{y-y'}) - k(\norm{\tilde{y}-y'})}{\norm{(y-y')-(\tilde{y}-y')}} \right|\mathbbm{1}_K d\mu_n(y') \nonumber \\
            =    & O(\norm{y-\tilde{y}}).
        \end{align}
    \end{proof}
    \begin{claim} \label{claim_2}
        $\sup_{y \in K}|G_n(y)-G(y)|=o(1)$.
    \end{claim}
    \begin{proof}[Proof of Claim]
        Since $K$ is compact, it suffices to show that for each $y \in K$, there exists $\delta>0$ such that
        \begin{align}
            \sup_{\tilde{y}\in B(y,\delta)} |G_n(\tilde{y})-G(\tilde{y})| \leq \epsilon.
        \end{align}
        Towards this end, by triangular inequality,
        \begin{align}
            |G_n(\tilde{y})-G(\tilde{y})| \leq |G_n(\tilde{y})-G_n(y)|+|G_n(y)-G(y)|+|G(y)-G(\tilde{y})|.
        \end{align}
        By the previous claim, the first term on the RHS vanishes because of equicontinuity. The second term vanishes because of weak convergence and the boundedness of the integrand $g(y,y')$. Finally, the last term vanishes because we can prove the equicontinuity of $G$ using a similar argument like the claim \ref{claim_1}.
    \end{proof}
    By claim \ref{claim_2} we have
    \begin{align}
        \iint g(y,y') \mathbbm{1}_{K \times K} d\mu_n d\mu_n = \iint g(y,y') \mathbbm{1}_{K \times K} d\mu^* d\mu_n +o(1) = \int G(y) \mathbbm{1}_K d\mu_n(y)+o(1).
    \end{align}
    Since $G$ is bounded and continuous by condition \ref{output_kernel_conditions}, using weak convergence of $\mu_n$ and we have
    \begin{align}
        \int G(y) \mathbbm{1}_K d\mu_n(y) = \int G(y) \mathbbm{1}_K d\mu^*n(y) + o(1) = \iint g(y,y')d\mu^* d\mu^* + o(1),
    \end{align}
    as desired.
\end{proof}
\begin{lemma}
    If $\mu^*$ has compact support in $Y$, then
    \begin{align}
        \iint p_n(x,x') \log(g(y,y')) d\mu_n d\mu_n = \iint p_n(x,x') \log(g(y,y')) d\mu^* d\mu^* +o(1).
    \end{align}
\end{lemma}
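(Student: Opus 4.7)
The plan is to mirror the proof of Lemma \ref{output_kernel_int_lemma}, inserting a preliminary step that uniformly replaces $p_n$ by its limit $p$. Since $\mu_n \to \mu^*$ weakly, Prokhorov's theorem gives tightness, so for any $\epsilon>0$ I can choose a compact set $K=K_X\times K_Y\subset\mathbb{R}^d\times\mathbb{R}^s$ containing the support of $\mu^*$ with $\mu_n(K^c),\mu^*(K^c)<\epsilon$ for every $n$. On $K_Y\times K_Y$ the function $\log g(y,y')=\log k(\norm{y-y'})$ is bounded: above because $k$ is bounded by Condition \ref{output_kernel_conditions}, and below because $k$ is positive and decreasing, so $k(\norm{y-y'})\geq k(\operatorname{diam} K_Y)>0$. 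Combined with the lower bound in \eqref{tsne:important_bounds}, which makes $p_n$ uniformly bounded, the contribution from $(K\times K)^c$ to each side is $O(\epsilon)$, reducing the problem to the analogous identity with both integrals restricted to $K\times K$.

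Next I would use uniform convergence $p_n\to p$ on $K_X\times K_X$. Lemma \ref{tsne:integrand_lemma_2} gives this at sample points, but the same argument extends to any compact subset by combining the uniform convergence $\sigma_n\to\sigma$ from Proposition \ref{tsne:uniform_convergence_prop_2} with the smoothness of $f_n$ in its spatial arguments and the lower bound in \eqref{tsne:important_bounds}. Because $\log g$ is bounded on $K_Y\times K_Y$, replacing $p_n$ by $p$ on both sides introduces only $o(1)$ error, so it suffices to prove
\begin{align*}
    \iint p(x,x')\log g(y,y')\mathbbm{1}_{K\times K}\,d\mu_n\,d\mu_n \;\longrightarrow\; \iint p(x,x')\log g(y,y')\mathbbm{1}_{K\times K}\,d\mu^*\,d\mu^*.
\end{align*}

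For this last step I adapt Claims \ref{claim_1} and \ref{claim_2}. Define
\begin{align*}
    H_n(x,y)=\int p(x,x')\log g(y,y')\mathbbm{1}_K\,d\mu_n(x',y'),\qquad H(x,y)=\int p(x,x')\log g(y,y')\mathbbm{1}_K\,d\mu^*(x',y').
\end{align*}
The integrand is Lipschitz in the outer pair $(x,y)$ uniformly on $K$: Lipschitz in $y$ by the mean-value argument of Claim \ref{claim_1} together with $\log$ being Lipschitz away from zero (which holds since $k$ is bounded below on $K_Y$), and Lipschitz in $x$ because $p$ inherits smoothness from $\sigma^*$. Hence $\{H_n\}$ and $H$ are uniformly equicontinuous on $K$, and pointwise weak convergence $\mu_n\to\mu^*$ upgrades to $\sup_{(x,y)\in K}\lvert H_n-H\rvert=o(1)$ exactly as in Claim \ref{claim_2}. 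A second application of weak convergence, applied to the bounded continuous function $H$ against $\mu_n$ on $K$ (smoothed by a continuous cutoff if needed to avoid boundary issues with $\mathbbm{1}_K$), yields the desired limit.

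The main obstacle will be producing the extended uniform convergence $p_n\to p$ on compact subsets rather than only at sample points, with Lipschitz constants that are uniform in $n$; once that is in hand, the rest of the argument is the same cutoff-plus-equicontinuity scheme already used for $g$ alone in Lemma \ref{output_kernel_int_lemma}.
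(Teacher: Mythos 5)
Your proof is correct and follows essentially the same route as the paper: compactness of the support of $\mu^*$ in $Y$ keeps $\log g$ bounded, Lemma \ref{tsne:integrand_lemma_2} replaces $p_n$ by $p$, and the cutoff-plus-equicontinuity scheme of Lemma \ref{output_kernel_int_lemma} (the paper invokes the analogous argument of Lemma \ref{entropy_lemma_1}) passes from $\mu_n$ to $\mu^*$. The ``main obstacle'' you flag is not actually one: the only integral containing $p_n$ is taken against the atomic measure $d\mu_n\,d\mu_n$, so uniform convergence at the sample points --- which Lemma \ref{tsne:integrand_lemma_2} already provides --- suffices, and no extension of $p_n \to p$ to all of $K_X \times K_X$ is needed.
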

\begin{proof}
    By compactness, $\norm{y-y'}$ is bounded from above and $g(y,y')$ does not decay to zero. In that case, by lemma \ref{tsne:integrand_lemma_2}, we can replace the $p_n$ in the integral by $p$:
    \begin{align}
        \iint p_n(x,x') \log(g(y,y')) d\mu_n d\mu_n = \iint p(x,x') \log(g(y,y')) d\mu_n d\mu_n + o(1).
    \end{align}
    mimicking the proof of lemma \ref{entropy_lemma_1} we can show that
    \begin{align}
        \iint p(x,x') \log(g(y,y')) d\mu_n d\mu_n = \iint p(x,x') \log(g(y,y')) d\mu^* d\mu^* + o(1).
    \end{align}
    This finishes the proof.
\end{proof}
\begin{lemma}
    If $\mu^*$ has compact support in $Y$, then
    \begin{align}
        \sum_{i \neq j} p_{ij} \log \left(\frac{p_{ij}}{q_{ij}}\right) = \iint p(x,x') \log\left(\frac{p(x,x')}{q(y,y')}\right) d\mu^* d\mu^* +o(1)
    \end{align}
\end{lemma}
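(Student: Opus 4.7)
The plan is to decompose $\log(p_n/q_n)$ so that each piece is covered by one of the three preceding lemmas in this subsection. As a first step, lemma~\ref{int_approx_lemma_2} converts the discrete sum into an integral against the empirical measure,
\[
    \sum_{i\neq j} p_{ij}\log\frac{p_{ij}}{q_{ij}} = \iint p_n(x,x')\log\frac{p_n(x,x')}{q_n(y,y')}\, d\mu_n\, d\mu_n + o(1),
\]
so it suffices to analyze the integral on the right. Using the definition of $q_n$ I would then write
\[
    \log\frac{p_n}{q_n} = \log p_n \;-\; \log g(y,y') \;+\; \log\!\Big(\iint g\, d\mu_n\, d\mu_n - \tfrac{1}{n}\Big),
\]
splitting the target into three pieces that can be handled independently.

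For the piece $\iint p_n\log p_n\, d\mu_n\, d\mu_n$, the integrand depends only on the $x$-coordinates, so the argument is parallel to the one inside lemma~\ref{entropy_lemma_1}: uniform convergence $p_n \to p$ from lemma~\ref{tsne:integrand_lemma_2} replaces $p_n$ by $p$ up to $o(1)$, after which weak convergence $\mu_n \to \mu^*$ applied to the bounded continuous function $p\log p$ (bounded because $p$ is bounded above and below on the compact $x$-support, by \eqref{tsne:important_bounds}) closes the step. For the piece $\iint p_n\log g(y,y')\, d\mu_n\, d\mu_n$, the lemma immediately preceding the target provides exactly what is needed; this is the only place where the compact $Y$-support hypothesis on $\mu^*$ genuinely enters, since without it $\log g(y,y')$ could blow up to $-\infty$ along pairs with $\|y-y'\|\to\infty$.

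For the remaining scalar factor $\log(\iint g\, d\mu_n\, d\mu_n - 1/n)$, lemma~\ref{output_kernel_int_lemma} yields convergence of the argument to $\iint g\, d\mu^*\, d\mu^*$, which is strictly positive because $g>0$; continuity of $\log$ on a neighborhood of this limit transfers convergence onto the log itself. This scalar is multiplied against the total mass $\iint p_n\, d\mu_n\, d\mu_n$, which in turn tends to $\iint p\, d\mu^*\, d\mu^*$ by the same combination of uniform convergence $p_n\to p$ and weak convergence $\mu_n \to \mu^*$. Reassembling the three pieces and matching them against the expansion of $\iint p(x,x')\log(p/q)\, d\mu^*\, d\mu^*$ yields the claimed identity.

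I do not anticipate any genuinely hard step: with the toolkit now in place the proof is essentially bookkeeping. The only point that deserves some vigilance is that the diagonal corrections (the $-1/n$ in the normalizer of $q_n$, the $-1/n$ present in $p'_n$, and the self-pair contribution in $\iint\,d\mu_n\,d\mu_n$) all get absorbed into $o(1)$ uniformly. These are controlled by \eqref{tsne:important_bounds} together with the compact-support assumption on $\mu^*$, which keeps every denominator bounded away from zero throughout; no estimate beyond what has already been developed is required.
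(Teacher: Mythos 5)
Your proposal is correct and follows essentially the same route as the paper, which simply combines Lemma \ref{int_approx_lemma_2}, Lemma \ref{output_kernel_int_lemma}, the preceding lemma on $\iint p_n\log g\,d\mu_n\,d\mu_n$, and Lemma \ref{entropy_lemma_1} via exactly the three-way decomposition of $\log(p_n/q_n)$ that you describe. Your additional remarks on the diagonal corrections and on where the compact $Y$-support is actually used are accurate elaborations of what the paper leaves implicit.
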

\begin{proof}
    This statement can be proved by combining the last 3 lemmas and lemma \ref{entropy_lemma_1}.
\end{proof}
Using the previous lemma and we can immediately get
\begin{proposition} \label{upper_bound_prop}
    If $\mu^*$ has compact support in $Y$, then
    \begin{align}
        \limsup_{n \to \infty} \inf_{Y \in (\mathbb{R}^s)^n} L_{n,\rho}(X,Y) \leq I_{\rho}(\mu^*).
    \end{align}
\end{proposition}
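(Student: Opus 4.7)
The plan is to deduce this proposition directly from the preceding lemma, which already does all of the analytic work of comparing the discrete loss with its continuous counterpart. The only thing left to do is produce a specific output configuration $Y$ whose loss matches the integral $I_\rho(\mu^*)$, and then exploit the trivial inequality $\inf_{Y'} L_{n,\rho}(X,Y') \leq L_{n,\rho}(X,Y)$.

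First, I would take the i.i.d. sample $(X_1,Y_1),\dots,(X_n,Y_n)$ drawn from $\mu^*$ as introduced at the start of Section~\ref{tsne:sec:upper_bound}. Since $\mu^* \in \tilde{\mathcal{P}}_X \subset \mathcal{P}_X$, its $\mathbb{R}^d$-marginal is $\mu_X$, so $X=(X_1,\dots,X_n)$ has the same law as the input data assumed in Theorem~\ref{tsne:main_theorem_1}; in particular the $p_{ij}$'s appearing in $L_{n,\rho}(X,Y)$ are computed from these $X_i$'s. Setting $Y=(Y_1,\dots,Y_n)$, we have deterministically that
\begin{align}
\inf_{Y'\in(\mathbb{R}^s)^n} L_{n,\rho}(X,Y') \;\leq\; L_{n,\rho}(X,Y).
\end{align}

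Second, the previous lemma (which uses compact support of $\mu^*$ in $Y$ together with Lemmas~\ref{int_approx_lemma_2}, \ref{output_kernel_int_lemma}, and the intermediate output-kernel approximation) gives
\begin{align}
L_{n,\rho}(X,Y) \;=\; \iint p(x,x')\log\!\left(\frac{p(x,x')}{q(y,y')}\right) d\mu^* d\mu^* + o(1).
\end{align}
By the definitions of $p$ and $q$ in \eqref{tsne:convergence_lemmas:notation}, the right-hand integral is exactly $I_\rho(\mu^*)$. Taking $\limsup_{n\to\infty}$ on both sides of the previous inequality and using this equality yields the desired bound.

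There is no real obstacle here; the compactness hypothesis on the $Y$-support of $\mu^*$ is the one ingredient that makes the previous lemma applicable, and it is granted in the statement. The only point worth verifying carefully is that the same $X_i$'s used to define the $p_{ij}$ are the ones coming from the $\mu^*$-coupling, which is precisely why the $Y_i$'s from the coupling are a legitimate competitor in the $\inf_{Y'}$ on the left-hand side. Once this compatibility is noted, the proposition follows in one line.
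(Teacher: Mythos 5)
Your proposal is correct and matches the paper's argument: the paper's proof is exactly the observation that, for the i.i.d. sample $(X_i,Y_i)$ drawn from $\mu^*$ introduced at the start of Section~\ref{tsne:sec:upper_bound}, the preceding lemma gives $L_{n,\rho}(X,Y)=I_\rho(\mu^*)+o(1)$, and the infimum over output configurations is bounded above by this particular competitor. Your added remark about the compatibility of the $X_i$'s (via $\mu^*\in\mathcal{P}_X$ having marginal $\mu_X$) is a detail the paper leaves implicit but is correctly handled.
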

\subsection{Auxiliary Lemmas}\label{tsne:sec:extra_proofs}
This section contains lemmas that are necessary but are not a part of the flow
in the proofs of convergence. Recall that $\mu_n = \frac{1}{n}\sum_{i=1}^n
    \delta_{X_i,Y_i}$ is the empirical measure of both the input and output data
points. We just ignore the first or the second part based on the integration
variable because it is a probability measure. Let $Y = Y^* = (Y_1,\dots,Y_n)$
be a minimizer of $L$. Recall from (\ref{L_derivative}) that
\begin{align}
    \frac{\partial L_{n, \rho}(X,Y)}{\partial y_i} = -\sum_{j=1, j \neq i}^n (p_{ij}-q_{ij}) \frac{\frac{\partial}{\partial y_i}g(y_i,y_j)}{g(y_i,y_j)} = 0.
\end{align}
\begin{lemma} \label{appendix_lemma}
    Suppose that $\mu_X$ has either compact support and a $C^0$ density $f(x)$. Then for $\mu$-a.e. $x$ and $y$ we have
    \begin{align}
        \int (p(x,x')-q(y,y')) \frac{\frac{\partial}{\partial y}g(y,y')}{g(y,y')} d\mu(x',y') = 0.
    \end{align}
\end{lemma}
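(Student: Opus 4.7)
The plan is to read the desired integral identity off the first-order condition $\partial L_{n,\rho}/\partial y_i = 0$ at the minimizer $Y^*$, and then pass to the limit $n \to \infty$ using the weak convergence $\mu_n \to \mu$ together with the convergence lemmas for $p_n$ and for the output normalization already established.

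First I would apply Lemma \ref{derivative_lemma} at $Y=Y^*$: for each $i$,
\[
\sum_{j \neq i} (p_{ij} - q_{ij})\, \frac{\partial_{y_i} g(y_i, y_j)}{g(y_i, y_j)} = 0.
\]
Using $n^2 p_{ij} = p_n(X_i,X_j)$ together with the exact identity $n^2 q_{ij} = g(y_i,y_j)/[\iint g\, d\mu_n d\mu_n - \tfrac{1}{n}k(0)] = q_n(y_i,y_j)$, this rewrites as
\[
\int \bigl(p_n(X_i, x') - q_n(y_i, y')\bigr)\, \frac{\partial_y g(y_i, y')}{g(y_i, y')}\, d\mu_n(x', y') = 0
\]
for every $i$; the diagonal term $j=i$ (excluded from the original sum but present in the $\mu_n$-integral) is killed by Condition \ref{output_kernel_conditions}.4, since $k'(0)=0$ forces $\partial_y g(y_i,y_i)/g(y_i,y_i) = 0$.

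To upgrade this pointwise-in-$i$ vanishing to the desired $\mu$-a.e.\ identity, I would test against an arbitrary bounded continuous $\phi(x,y)$ with compact support: multiplying the previous display by $\phi(X_i, y_i)$, averaging in $i$, and rewriting as a double integral against $\mu_n$ gives
\[
\iint \phi(x,y) \int \bigl(p_n(x,x') - q_n(y,y')\bigr)\, \frac{\partial_y g(y,y')}{g(y,y')}\, d\mu_n(x',y')\, d\mu_n(x,y) = 0.
\]
Using weak convergence $\mu_n \to \mu$, the uniform convergence $p_n \to p$ from Lemma \ref{tsne:integrand_lemma_2}, and the convergence of the output normalization from Lemma \ref{output_kernel_int_lemma}, the left-hand side should tend to
\[
\iint \phi(x,y) \int \bigl(p(x,x') - q(y,y')\bigr)\, \frac{\partial_y g(y,y')}{g(y,y')}\, d\mu(x',y')\, d\mu(x,y).
\]
Since $\phi$ is arbitrary, the inner integral must vanish $\mu$-a.e., which is the claim.

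The hard part will be justifying the limit passage, because the factor $\partial_y g/g = k'(\norm{y-y'})(y-y')/(\norm{y-y'}\, k(\norm{y-y'}))$, while continuous (with diagonal value $0$ thanks to $k'(0)=0$), need not be bounded: $1/k(r)$ can blow up as $r \to \infty$. I expect this forces either a truncation argument---restricting $(y,y')$ to a large ball on which $1/k$ is controlled and absorbing the tail via integrability of $g$ (Condition \ref{output_kernel_conditions}.2) and the compact support of $\phi$---or establishing an a priori localization of the minimizer's output points inside a bounded region, a fact closely related to Theorem \ref{tsne:main_theorem_2}. Once a uniform integrand bound is available on the truncated region, the remainder is a routine weak-convergence computation combined with the uniform input-kernel convergence.
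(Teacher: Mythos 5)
Your proposal follows essentially the same route as the paper: read the identity off the first-order condition $\partial L_{n,\rho}/\partial y_i = 0$ at the minimizer, multiply by a test function $\alpha(X_i,Y_i)$, sum over $i$ to obtain a double $\mu_n$-integral equal to zero (with the diagonal term killed by $k'(0)=0$), and then pass to the limit via Lemma \ref{tsne:integrand_lemma_2}, Lemma \ref{output_kernel_int_lemma} and the weak convergence $\mu_n\to\mu$ (the paper packages this last step as Lemmas \ref{appendix_lemma_1} and \ref{appendix_lemma_2}). The one place you go beyond the paper is your concern about the boundedness of $\partial_y g/g$: the paper simply asserts this factor is bounded ``based on our assumptions,'' whereas Condition \ref{output_kernel_conditions} only gives $k'$ bounded and $k$ positive decreasing, so $k'/k$ could in principle blow up as $\norm{y-y'}\to\infty$; your proposed truncation or a priori localization is a legitimate repair of a step the paper glosses over rather than a flaw in your argument.
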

\begin{proof}
    Let $\alpha(x,y)$ be a test function. It suffices to show that
    \begin{align}
        \iint \alpha(x,y)(p(x,x')-q(y,y')) \frac{\frac{\partial}{\partial y}g(y,y')}{g(y,y')} d\mu(x',y') = 0.
    \end{align}
    Recalling the definition of $p_n$ and $q_n$, we see from (\ref{L_derivative}) that for all $i$,
    \begin{align}
        \frac{1}{n^2}\sum_{j=1,j \neq i}^n (p_n(X_i,X_j)-q_n(X_i,X_j)) \frac{\frac{\partial}{\partial Y_i}g(Y_i,Y_j)}{g(Y_i,Y_j)} = 0.
    \end{align}
    Multiply by $\alpha(X_i,Y_i)$ for each $i$ and the both sides are still zero. Afterwards, take the summation over $i$ and we get
    \begin{align}
          & \iint \alpha(x,y)(p_n(x,x')-q_n(y,y')) \frac{\frac{\partial}{\partial y}g(y,y')}{g(y,y')} d\mu_n d\mu_n \nonumber                \\
        = & \frac{1}{n^2}\sum_{1 \leq i,j \leq n} (p_n(X_i,X_j)-q_n(X_i,X_j)) \frac{\frac{\partial}{\partial Y_i}g(Y_i,Y_j)}{g(Y_i,Y_j)} = 0
    \end{align}
    if we require $k'(0) = 0$.
    By lemma \ref{tsne:integrand_lemma_2} and lemma \ref{output_kernel_int_lemma}, we have
    \begin{align}
        p_n(X_i,X_j)-q_n(Y_i,Y_j) = p(X_i,X_j)-q(Y_i,Y_j) + o(1).
    \end{align}
    Since both $\alpha$ and $\frac{\frac{\partial}{\partial y_i}g(y_i,y_j)}{g(y_i,y_j)}$ are bounded based on our assumptions, we have
    \begin{align}
          & \iint \alpha(x,y)(p_n(x,x')-q_n(y,y')) \frac{\frac{\partial}{\partial y}g(y,y')}{g(y,y')} d\mu_n d\mu_n \nonumber \\
        = & \iint \alpha(x,y)(p(x,x')-q(y,y')) \frac{\frac{\partial}{\partial y}g(y,y')}{g(y,y')} d\mu_n d\mu_n + o(1).
    \end{align}
    By lemma \ref{appendix_lemma_1} and lemma \ref{appendix_lemma_2} below we have
    \begin{align}
          & \iint \alpha(x,y)(p(x,x')-q(y,y')) \frac{\frac{\partial}{\partial y}g(y,y')}{g(y,y')} d\mu_n d\mu_n \nonumber \\
        = & \iint \alpha(x,y)(p(x,x')-q(y,y')) \frac{\frac{\partial}{\partial y}g(y,y')}{g(y,y')} d\mu d\mu + o(1).
    \end{align}
    Since $\alpha$ is an arbitrary test function,
    \begin{align}
        \int (p(x,x')-q(y,y')) \frac{\frac{\partial}{\partial y}g(y,y')}{g(y,y')} d\mu(x',y') = 0.
    \end{align}
\end{proof}
\begin{lemma} \label{appendix_lemma_1}
    The following holds as $n \to \infty$:
    \begin{align}
        \iint \alpha(x,y)p(x,x') \frac{\frac{\partial}{\partial y}g(y,y')}{g(y,y')} d\mu_n d\mu_n = \iint \alpha(x,y)p(x,x') \frac{\frac{\partial}{\partial y}g(y,y')}{g(y,y')} d\mu d\mu + o(1)
    \end{align}
\end{lemma}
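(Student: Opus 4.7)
The plan is to recognize this as a weak-convergence statement for the product measure $\mu_n \otimes \mu_n \to \mu \otimes \mu$ applied to the integrand
\[
\phi(x,y,x',y') := \alpha(x,y)\, p(x,x')\, \frac{\frac{\partial}{\partial y}g(y,y')}{g(y,y')}.
\]
If $\phi$ were bounded and continuous on $(\mathbb{R}^d \times \mathbb{R}^s)^2$, the conclusion would follow immediately from the standard characterization of weak convergence. The real work is in (i) using tightness to cut off tails where the integrand need not be continuous, and (ii) verifying the regularity of $\phi$ on compacts. The nontrivial factor is $p(x,x')$, which depends implicitly on $\sigma^*_{\rho,\mu}$.

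For the tightness step, Prokhorov's theorem applied to the weakly convergent sequence $\mu_n$ yields, for every $\epsilon>0$, a compact $K \subset \mathbb{R}^d \times \mathbb{R}^s$ with $\mu(K^c), \mu_n(K^c) \leq \epsilon$ for all $n$. The integrand is uniformly bounded: $\alpha$ is a bounded test function, $\frac{\partial g/\partial y}{g}$ is bounded by the hypotheses on $k$ invoked earlier, and $p(x,x')$ is bounded above because its numerator $f_{\sigma^*(x)}(x,x') \leq 1$ while the denominator is bounded below by the constant $M$ from \eqref{tsne:important_bounds}. Consequently the contributions from $(K \times K)^c$ to both integrals are $O(\epsilon)$ uniformly in $n$.

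For the regularity step, on $K \times K$ I would verify that $\phi$ is continuous. The factor $\alpha$ is continuous by choice of test function, and $\frac{\partial g(y,y')/\partial y}{g(y,y')}$ is continuous in $(y,y')$ on $K \times K$, with the apparent singularity at $y = y'$ removed by the assumption $k'(0)=0$ from condition \ref{output_kernel_conditions}. For the joint continuity of $p(x,x')$, smoothness of $x \mapsto \sigma^*_{\rho,\mu}(x)$ established by the proposition in section \ref{tsne:sec:existence_uniqueness} makes $f_{\sigma^*(x)}(x,x') = \exp(-w(\sigma^*(x)\norm{x-x'}^{\theta}))$ jointly continuous in $(x,x')$; continuity of its integral $\int f_{\sigma^*(x)}(x,x')\,d\mu(x')$ in $x$ then follows by dominated convergence using the tail integrability guaranteed by condition \ref{input_kernel_conditions}, and this denominator is uniformly bounded below on $K$ by \eqref{tsne:important_bounds}. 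Hence $p$, and therefore $\phi$, is bounded and continuous on $K \times K$.

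Combining the two steps, applying the weak convergence $\mu_n \otimes \mu_n \to \mu \otimes \mu$ to the bounded continuous function $\phi$ restricted to a neighborhood of $K \times K$ gives
\[
\iint_{K \times K} \phi\, d\mu_n d\mu_n \;\longrightarrow\; \iint_{K \times K} \phi\, d\mu d\mu,
\]
and sending $\epsilon \to 0$ after $n \to \infty$ closes the proof. The main obstacle is verifying the joint continuity of $p(x,x')$, since it inherits an implicit dependence on $\sigma^*_{\rho,\mu}(x)$; once smoothness of $\sigma^*$ from section \ref{tsne:sec:existence_uniqueness} is in hand, the remaining steps are a routine Prokhorov--Portmanteau argument.
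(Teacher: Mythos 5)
Your argument is sound in outline and is essentially the technique the paper uses for the analogous Lemma \ref{output_kernel_int_lemma} (tightness via Prokhorov, boundedness and continuity of the integrand on compacts, then weak convergence of $\mu_n\otimes\mu_n$); the paper itself omits a written proof of Lemma \ref{appendix_lemma_1}, deferring to "the strategy of Lemma \ref{int_approx_lemma_2}," and since the statement involves only integrals (no sums over $i\neq j$), there are in fact no diagonal terms to remove, so your route is the natural one. Two points deserve care. First, you assert that $\frac{\partial_y g(y,y')}{g(y,y')}$ is globally bounded, which is what you need for the $O(\epsilon)$ tail estimate off $K\times K$; Condition \ref{output_kernel_conditions} only gives $k'$ bounded and $k$ positive decaying, so the ratio $k'/k$ can blow up as $\norm{y-y'}\to\infty$ (e.g.\ $k(r)=e^{-r^2}$ satisfies the stated conditions up to integrability but has $k'/k=-2r$). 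The paper makes the same unstated assumption in the proof of Lemma \ref{appendix_lemma}, so you are consistent with it, but it is a genuine extra hypothesis on $k$, not a consequence of the listed conditions. Second, $\phi\,\mathbbm{1}_{K\times K}$ is not continuous, so you cannot apply weak convergence to it directly; your phrase "restricted to a neighborhood of $K\times K$" should be made precise by inserting a continuous cutoff equal to $1$ on $K$ and supported in a slightly larger compact set (or by choosing $K$ with $\mu(\partial K)=0$). With those two repairs the proof is complete, and your identification of the continuity of $p(x,x')$ through the smoothness of $\sigma^*_{\rho,\mu}$ as the only nonroutine ingredient is accurate.
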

\begin{lemma} \label{appendix_lemma_2}
    The following holds as $n \to \infty$:
    \begin{align}
        \iint \alpha(x,y)q(y,y') \frac{\frac{\partial}{\partial y}g(y,y')}{g(y,y')} d\mu_n d\mu_n = \iint \alpha(x,y)q(y,y') \frac{\frac{\partial}{\partial y}g(y,y')}{g(y,y')} d\mu d\mu + o(1)
    \end{align}
\end{lemma}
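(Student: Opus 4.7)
The plan is to exploit a fortunate algebraic cancellation inside the integrand. Since $q(y,y') = g(y,y')/Z$ with $Z := \iint g(y,y')\, d\mu(y)\, d\mu(y')$ a finite positive constant by condition \ref{output_kernel_conditions}, the integrand simplifies to
\begin{align*}
    \alpha(x,y)\, q(y,y')\, \frac{\frac{\partial}{\partial y}g(y,y')}{g(y,y')} = \frac{1}{Z}\, \alpha(x,y)\, \frac{\partial}{\partial y}g(y,y').
\end{align*}
This eliminates the troublesome factor $g(y,y')$ from the denominator (which could otherwise be arbitrarily small for far-apart $y,y'$) and reduces the lemma to showing that $\iint \alpha(x,y)\,\frac{\partial}{\partial y}g(y,y')\,d\mu_n d\mu_n$ converges to the corresponding integral against $d\mu d\mu$.

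Next I would verify that the simplified integrand is bounded and jointly continuous on $(\mathbb{R}^d \times \mathbb{R}^s)^2$. Away from the diagonal $\{y = y'\}$, the chain rule gives $\frac{\partial}{\partial y}g(y,y') = k'(\norm{y-y'})\,(y-y')/\norm{y-y'}$, whose norm is bounded by $\norm{k'}_\infty < \infty$ by part 3 of condition \ref{output_kernel_conditions}. Joint continuity on the diagonal is the delicate point: the unit vector $(y-y')/\norm{y-y'}$ has no limit as $y' \to y$, but the factor $k'(\norm{y-y'}) \to k'(0) = 0$ kills the entire expression, so $\frac{\partial}{\partial y}g$ admits a continuous extension by $0$ on $\{y=y'\}$. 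Together with continuity and boundedness of the test function $\alpha$, the full integrand is bounded and continuous on the product space.

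To conclude, I would invoke weak convergence. Since $(X_i,Y_i)$ are i.i.d.\ from $\mu^*$, the empirical measures satisfy $\mu_n \to \mu^* = \mu$ weakly, and hence $\mu_n \otimes \mu_n \to \mu \otimes \mu$ weakly on $(\mathbb{R}^d \times \mathbb{R}^s)^2$. Applying the Portmanteau theorem to the bounded continuous test function $(x,y,x',y') \mapsto Z^{-1}\, \alpha(x,y)\, \frac{\partial}{\partial y}g(y,y')$ yields the desired $o(1)$ estimate.

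The main obstacle is precisely the continuity at the diagonal, which would fail without the assumption $k'(0) = 0$. This is the same assumption that lets lemma \ref{appendix_lemma} discard diagonal terms in the empirical sum, so its reappearance here is natural. Beyond this subtlety, the argument is a routine application of weak convergence once the algebraic cancellation is exploited; no tightness arguments are needed because the integrand, after simplification, is already bounded uniformly on the whole product space.
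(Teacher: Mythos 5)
Your argument is correct in substance, and it is more explicit than what the paper actually provides: the paper offers no proof of this lemma beyond the remark that it ``can be proved via a similar strategy in the proof of Lemma \ref{int_approx_lemma_2} by analyzing the diagonal terms.'' Your key observation --- that $q(y,y')\,\partial_y g(y,y')/g(y,y') = Z^{-1}\partial_y g(y,y')$ with $Z=\iint g\,d\mu\,d\mu$ a fixed finite positive constant, so the potentially vanishing denominator $g$ disappears entirely --- is exactly what makes this lemma easier than Lemma \ref{appendix_lemma_1}, and it also disposes of the diagonal terms automatically, since $\partial_y g$ vanishes on $\{y=y'\}$ by $k'(0)=0$. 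After that, boundedness by $\norm{k'}_\infty$ and weak convergence of $\mu_n\otimes\mu_n$ finish the job. Two caveats. First, your justification of the weak convergence is tied to the wrong setting: in Section \ref{tsne:sec:extra_proofs}, where this lemma is invoked (inside Lemma \ref{appendix_lemma}, feeding into Proposition \ref{lower_bound_prop}), the $Y_i$ are the t-SNE minimizers $Y^*$, not i.i.d.\ draws from $\mu^*$, and $\mu_n\to\mu$ only along the subsequence extracted in Proposition \ref{lower_bound_prop}; the conclusion should be stated along that subsequence, with weak convergence taken as the standing hypothesis rather than derived from an i.i.d.\ law of large numbers. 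Second, the continuous extension of $\partial_y g$ by $0$ across the diagonal requires not just $k'(0)=0$ but continuity of $k'$ at $0$ (Condition \ref{output_kernel_conditions} literally only asserts boundedness of $k'$); alternatively, you can bypass this by noting that the integrand is bounded and its discontinuity set is contained in the diagonal, and either assume $k\in C^1$ or handle the $i=j$ terms separately as the paper suggests. Neither caveat affects the correctness of the core argument.
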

Both lemmas above can be proved via a similar strategy in the proof of lemma \ref{int_approx_lemma_2} by analyzing the diagonal terms.
\subsection{Proof of Theorem \ref{tsne:main_theorem_1}} \label{tsne:sec:main_theorem_1}
Write
\begin{align}
    d_{n,\rho}(X) = \inf_{Y \in (\mathbb{R}^s)^n}L_{n,\rho}(X,Y).
\end{align}
By proposition \ref{upper_bound_prop}, \ref{lower_bound_prop} and the fact that $\mu^* = \argmin_{\mu \in \tilde{\mathcal{P}}_X} I_{\rho}(\mu)$, there exists $\mu \in \tilde{\mathcal{P}}_X$ and a subsequence $n(k)$ such that
\begin{align}
    I_{\rho}(\mu^*) \geq \limsup_{n \to \infty} d_{n,\rho}(X) \geq \limsup_{n \to \infty} d_{n(k),\rho}(X) \geq I_{\rho}(\mu) \geq I_{\rho}(\mu^*).
\end{align}
Therefore the limit
\begin{align}
    \lim_{n \to \infty} d_{n(k),\rho}(X) = I_{\rho}(\mu) = I_{\rho}(\mu^*)
\end{align}
holds when we look at the sub-sequence $n(k)$ with the empirical measure $\mu$ converging to $\mu$ weakly. For the general limit, we can repeat the reasoning above by looking at any convergent sub-sequences of $d_{n,\rho}(X)$. We are able to find a sub-sequence inside each convergent sub-sequence and they all converges to $I_{\rho}(\mu^*)$. Therefore each convergent sub-sequence of $d_{n,\rho}(X)$ has the same limit and the theorem is proved. \qed
\subsection{Proof of Theorem \ref{tsne:main_theorem_2}}
We need another lemma before proving this theorem.
\begin{lemma} \label{home_stretch_lemma}
    For every $\mu \in \tilde{\mathcal{P}}_X$, there exists a function $h_{\mu}(x)$ that does not depend on $y$ such that
    \begin{align}
        \int p(x,x') \log(1+\norm{y-y'})d\mu = h_{\mu(x)} - \frac{\int g(y,y')d\mu}{\iint g(y,y')d\mu d\mu}.
    \end{align}
    Moreover, if $\mu_X$ has compact support, then $h_{\mu}$ is continuous on the support of $\mu_X$.
\end{lemma}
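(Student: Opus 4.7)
The plan is to recognize that membership of $\mu$ in $\tilde{\mathcal{P}}_X$ is precisely the statement that a certain $y$-gradient vanishes $\mu$-almost everywhere, and then to antidifferentiate in $y$ to recover the identity. Starting from the defining relation
\[
\int \bigl(p(x,x')-q(y,y')\bigr)\,\frac{\partial_y g(y,y')}{g(y,y')}\,d\mu(x',y') = 0 \qquad \mu\text{-a.e.,}
\]
I rewrite $\partial_y g/g = \partial_y \log g$. Using $q(y,y') = g(y,y')/Z$ with $Z := \iint g\,d\mu\,d\mu$, the $q$-piece collapses to
\[
\frac{1}{Z}\int \partial_y g(y,y')\,d\mu(y') = \partial_y\!\left(\frac{\int g(y,y')\,d\mu(y')}{Z}\right),
\]
while the $p$-piece becomes $\partial_y \int p(x,x')\log g(y,y')\,d\mu(x',y')$. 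Differentiation under the integral is justified by Condition \ref{output_kernel_conditions}: boundedness of $k'$ together with positivity of $k$ yields a local $L^\infty$ bound on $\partial_y\log g$, and the remaining factors are integrable against $\mu$.

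Consequently, for any $\mu\in\tilde{\mathcal{P}}_X$ the function
\[
\Phi(x,y) := \int p(x,x')\log g(y,y')\,d\mu(x',y') - \frac{\int g(y,y')\,d\mu(y')}{\iint g\,d\mu\,d\mu}
\]
has $\partial_y\Phi(x,y) = 0$ on $\mathrm{supp}(\mu)$; joint continuity of the integrands (using smoothness of $\sigma^*_{\rho,\mu}$ from section \ref{tsne:sec:existence_uniqueness} together with continuity of $g$ and $\partial_y g$) promotes the a.e.\ vanishing to continuous vanishing on $\mathrm{supp}(\mu)$. For $\mu$-a.e.\ $x$, $\Phi(x,\cdot)$ is therefore constant in $y$, so setting $h_\mu(x) := -\Phi(x,y)$ gives a function of $x$ alone. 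Rearranging produces
\[
\int p(x,x')\bigl(-\log g(y,y')\bigr)\,d\mu(x',y') = h_\mu(x) - \frac{\int g(y,y')\,d\mu(y')}{\iint g\,d\mu\,d\mu},
\]
which reproduces the lemma's identity once $-\log g$ is identified with the log-tail $\log(1+\norm{y-y'})$ of the kernel (for the classical $t$-kernel $g = 1/(1+\norm{y-y'}^2)$, this is literally $\log(1+\norm{y-y'}^2)$).

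For the continuity statement, compactness of $\mathrm{supp}(\mu_X)$ combined with continuity of $f$ implies that $\sigma^*_{\rho,\mu}(x)$ is smooth and bounded on $\mathrm{supp}(\mu_X)$, which makes $p(x,x')$ continuous in $x$. Writing $h_\mu(x) = -\Phi(x,y_0)$ for any fixed reference point $y_0$, a dominated-convergence argument using uniform $L^1$ dominators (boundedness of $g$ above, boundedness of $g$ away from zero on $\{y':\norm{y_0-y'}\le R\}$, and integrable tails from Condition \ref{output_kernel_conditions}) delivers continuity of $h_\mu$ on $\mathrm{supp}(\mu_X)$.

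The main obstacle is the passage from $\mu$-a.e.\ vanishing of $\partial_y\Phi$ to pointwise constancy in $y$: one must control $\partial_y\log g = (k'/k)(\norm{y-y'})\cdot(y-y')/\norm{y-y'}$ uniformly on compacts (using $k'$ bounded and $k$ positive) in order to legitimise both the differentiation step and the use of joint continuity to extend the a.e.\ identity across the support, so that $h_\mu(x)$ is single-valued. The smoothness of $\sigma^*_{\rho,\mu}$ coming from the implicit function theorem is the structural input that powers both this extension and the final continuity conclusion.
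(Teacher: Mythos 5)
Your proposal is correct and follows essentially the same route as the paper: invoke the defining relation of $\tilde{\mathcal{P}}_X$ (Lemma \ref{appendix_lemma}), recognize the integrand as a $y$-gradient of $\int p\log(g(y,y')^{-1})d\mu - \int g\,d\mu/\iint g\,d\mu\,d\mu$, antidifferentiate to obtain $h_\mu$, and prove continuity via uniform continuity of $p$ on the compact support together with domination of the $\log(g^{-1})$ factor. You also correctly note (and resolve in the same way the paper implicitly does) that the $\log(1+\norm{y-y'})$ in the lemma statement should be read as $\log(g(y,y')^{-1})$, the form actually used in the paper's proof and in the proof of Theorem \ref{tsne:main_theorem_2}.
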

\begin{proof}
    Recall from lemma \ref{derivative_lemma} and lemma \ref{appendix_lemma}, we have
    \begin{align}
        0 = & \int (p(x,x')-q(y,y')) \frac{\frac{\partial}{\partial y}g(y,y')}{g(y,y')} d\mu(x',y') = \int p(x,x') \frac{\frac{\partial}{\partial y}g(y,y')}{g(y,y')} d \mu -  \frac{\int \frac{\partial}{\partial y}g(y,y')d\mu}{\iint g(y,y')d\mu d\mu} \nonumber \\
        =   & \nabla_y \left(\int p(x,x') \log (g(y,y')^{-1}) d \mu -  \frac{\int g(y,y')d\mu}{\iint g(y,y')d\mu d\mu}\right).
    \end{align}
    This means that there exists $h_{\mu}: \mathbb{R}^d \to \mathbb{R}$ such that
    \begin{align}
        h_{\mu(x)} = \int p(x,x') \log(g(y,y')^{-1})d\mu + \frac{\int g(y,y')d\mu}{\iint g(y,y')d\mu d\mu}.
    \end{align}
    Since $h_{\mu(x)}$ does not change with $y$, it suffices to show that $\int p(x,x') \log(g(y,y')^{-1})d\mu$ is continuous in $x$. Let $\epsilon>0$ and suppose $x_n \to x$. Since $p$ is continuous, and $\mu_X$ has compact support, we have for some constant $C$,
    \begin{align}
        \iint \log(g(y,y')^{-1})d\mu d\mu \leq C \int p(x,x') \log(g(y,y')^{-1})d\mu < \infty.
    \end{align}
    In particular, for almost every $y$, $\int \log(g(y,y')^{-1})d\mu < \infty$. Since $p(x,x')$ is continuous and $\mu_X$ has compact support, $p(x,x')$ is uniformly continuous. Specifically, there exists $\delta = \delta(y,\mu)$ such that if $\norm{x_n-x} \leq \delta$,
    \begin{align}
        \sup_x |p_n(x,x')-p(x,x')| \leq \frac{\epsilon}{\log(g(y,y')^{-1})d\mu}.
    \end{align}
    This means
    \begin{align}
        \left|\int(p_n(x,x')-p(x,x')) \log(g(y,y')^{-1})d \mu\right| < \epsilon,
    \end{align}
    which is the definition of continuity. The proof is finished.
\end{proof}
\begin{proof}[Proof of Theorem \ref{tsne:main_theorem_2}] \label{tsne:sec:main_theorem_2}
    We prove by contradiction. Define $D_n = \{\norm{y} \geq n\}$. If $\mu^*$ does not have compact support in $Y$ then $\mu(D_n)>0$ for all $n$. By lemma \ref{home_stretch_lemma}, there exists continuous $h(x)$ such that
    \begin{align}
        \int p(x,x') \log(g(y,y')^{-1})d\mu^* = h(x) -  \frac{\int g(y,y')d\mu^*}{\iint g(y,y')d\mu^* d\mu^*}.
    \end{align}
    Multiply both sides by $\mathbbm{1}_{D_n(x,y)}/\mu^*(D_n)$ and integrate with respect to the other variable:
    \begin{align}
         & \iint p(x,x') \log(g(y,y')^{-1}) \frac{\mathbbm{1}_{D_n(x,y)}}{\mu^*(D_n)}d\mu^* d\mu^* \nonumber                                                                                          \\
         & \qquad \qquad \qquad= \int h(x)\frac{\mathbbm{1}_{D_n(x,y)}}{\mu^*(D_n)}d\mu^* -  \frac{\iint g(y,y')\frac{\mathbbm{1}_{D_n(x,y)}}{\mu^*(D_n)} d\mu^* d\mu^*}{\iint g(y,y')d\mu^* d\mu^*}.
    \end{align}
    Since $h$ is continuous and $\mu_X$ has compact support, the first term of the RHS is bounded uniformly in $n$. Moreover, $g(y,y')$  is bounded so the second term is also bounded uniformly in $n$. It follows that the LHS is bounded uniformly in $n$. With this being said, because $p(x,x')>0$ and $\mu_X$ has compact support, there exists a constant $c$ such that $p(x,x') \geq c$ for all $x,x'$ in the support of $\mu_X$.

    Meanwhile, since the output kernel $g(y,y') = k(\norm{y-y'})$ is integrable, it
    has a decaying tail and there exists $C(n)$ such that $C(n) \to \infty$ and if
    $\norm{x-x'} \geq n/2$, $g(y,y')^{-1}=k(\norm{y-y'})^{-1} \geq log (1+C(n))$.
    We compute
    \begin{align}
             & \frac{1}{\mu^*(D_n)} \iint p(x,x') \log(g(y,y')^{-1}) \mathbbm{1}_{\norm{y} \geq n} d\mu^* d\mu^* \nonumber                                   \\
        \geq & \frac{1}{\mu^*(D_n)} \iint p(x,x') \log(g(y,y')^{-1}) \mathbbm{1}_{\norm{y} \geq n} \mathbbm{1}_{\norm{y-y'} \geq n/2}d\mu^* d\mu^* \nonumber \\
        \geq & c \frac{\log(1+C(n))}{\mu^*(D_n)} \iint \mathbbm{1}_{\norm{y} \geq n} \mathbbm{1}_{\norm{y'} \leq n/2}d\mu^* d\mu^* \nonumber                 \\
        =    & c\log(1+C(n)) \mu^*(\norm{Y'}\leq n/2).
    \end{align}
    Clearly,
    \begin{align}
        \lim_{n \to \infty} c\log(1+C(n)) \mu^*(\norm{Y'}\leq n/2) = \infty.
    \end{align}
    This contradicts with the boundedness. We must have that $\mu^*(D_n)=0$ for $n$ sufficiently large. That is, $\mu^*(\norm{Y} \geq n)=0$, so $\mu^*$ has compact support in $Y$.
\end{proof}
\bibliographystyle{hunsrtnat}
\setcitestyle{numbers}
\nocite{*}
\bibliography{citation}

\end{document}